\begin{document}
%

\newtheorem{theorem}{Theorem}
\newtheorem{lemma}{Lemma}
\newtheorem{claim}{Claim}
\newtheorem{proposition}{Proposition}
\newtheorem{definition}{Definition}
\newtheorem{corollary}{Corollary}
\renewcommand{\phi}{\varphi}
\renewcommand{\epsilon}{\varepsilon}
\newcommand{\<}{\langle}
\renewcommand{\>}{\rangle}
\newenvironment{proof}{\noindent{\sc Proof.}}{\hfill $\boxtimes\hspace{2mm}$\linebreak}
\newcommand{\qed}{\hfill $\boxtimes\hspace{1mm}$}

\newenvironment{proof-of-claim}{\noindent{\sc Proof of Claim.}}{\hfill $\boxtimes\hspace{2mm}$\linebreak}

\renewcommand{\H}{{\sf H}}
\newcommand{\C}{{\sf C}}
\newcommand{\N}{{\sf N}}
\newcommand{\B}{{\sf B}}
\newcommand{\cN}{{\sf \overline{N}}}
\newcommand{\cR}{{\sf \overline{R}}}
\newcommand{\KR}{\,\mbox{\scalebox{.75}{\framebox(10,10){$\sf R$}}}\,}
\newcommand{\KN}{\,\mbox{\scalebox{.75}{\framebox(10,10){$\sf N$}}}\,}
\renewcommand{\Box}{\,\mbox{\scalebox{.75}{\framebox(10,10){$ $}}}\,}
\newcommand{\cKN}{\overline{\mbox{\scalebox{.75}{\framebox(10,10){$\sf N$}}}}}
\newcommand{\R}{{\sf R}}
\renewcommand{\S}{{\sf S}}
\newcommand{\D}{{\sf D}}
\newcommand{\A}{{\sf A}}

\newsavebox{\diamonddotsavebox}
\sbox{\diamonddotsavebox}{$\Diamond$\hspace{-1.8mm}\raisebox{0.3mm}{$\cdot$}\hspace{1mm}}
\newcommand{\diamonddot}{\usebox{\diamonddotsavebox}}

\title{Subgame-Perfect Blameworthiness}

\title{Counterfactual Reasoning in  Extensive-Form Games}

\title{Blameworthiness in Security Games}


\author{Pavel Naumov \\Department of Mathematical Sciences\\  Claremont McKenna College\\Claremont, California 91711\\pgn2@cornell.edu
\And  Jia Tao \\Department of Computer Science\\Lafayette College\\Easton, Pennsylvania 18042\\taoj@lafayette.edu}

\maketitle

\begin{abstract}
Security games are an example of a successful real-world application of game theory. The paper defines blameworthiness of  the defender and the attacker in security games using the principle of alternative possibilities and provides a sound and complete logical system for reasoning about blameworthiness in such games. Two of the axioms of this system capture the asymmetry of information in security games.
\end{abstract}


\section{Introduction}

In this paper we study the properties of blameworthiness in security games~\cite{s34}. Security games are used for 
canine airport patrol~\cite{pjmoptwpk08aamas,jtpkrto10interfaces}, 
airport passenger screening~\cite{bsst16aaai},
protecting endangered animals and fish stocks~\cite{fst15ijcai}, 
U.S. Coast Guard port patrol~\cite{sfakt18ijcai,ats16ihs},
and randomized deployment of U.S. air marshals~\cite{sfakt18ijcai}.

\begin{figure}[ht]
\vspace{-1mm}
\begin{center}
\renewcommand{\arraystretch}{1.3}
\begin{tabular}{ l | c  c }
Defender \textbackslash Attacker  & Terminal 1 & Terminal 2  \\  \hline 
 Terminal 1 & $20$ & $120$ \\
 Terminal 2 & $200$ & $16$   
\end{tabular}
\caption{Expected Human Losses in Security Game $G_1$.}\label{losses figure}
\end{center}
\vspace{-3mm}
\end{figure}

As an example, consider a security game $G_1$ in which a defender is trying to protect two terminals in an airport from an attacker. Due to limited resources, the defender can patrol only one terminal at a given time. If the defender chooses to patrol Terminal 1 and the attacker chooses to attack Terminal 2, then the human losses at Terminal 2 are estimated at 120, see Figure~\ref{losses figure}. However, if the defender chooses to patrol Terminal 2 while the attacker still chooses to attack Terminal 2, then the expected number of the human losses at Terminal 2 is only 16, see Figure~\ref{losses figure}. Generally speaking, the goal of the defender is to minimize human losses, while the goal of the attacker is to maximize them. However, the utility functions in security games usually take into account not only the human losses, but also the cost to protect and to attack the target to the defender and the attacker respectively. Such a cost has to be converted to human lives using some factor, possibly different for the defender and the attacker. In game $G_1$, we assume that the cost of defending Terminal 1 and Terminal 2 is 8 and 4 respectively, while the cost of attacking these terminals is 12 and 8 respectively, see Figure~\ref{utility figure}.  As a result, for example, if the defender chooses to patrol Terminal 1 and the attacker chooses to attack Terminal 2, then the payoff of the defender is $-120-8=-128$ and the payoff of the attacker is $120-8=112$, see Figure~\ref{utility figure}.   

\begin{figure}[ht]
\begin{center}
\renewcommand{\arraystretch}{1.3}
\begin{tabular}{ l | c  c }
Defender \textbackslash Attacker  & Terminal 1 & Terminal 2  \\ [-1ex]
                                & (cost $12$) & (cost $8$)\\ \hline
 Terminal 1 (cost $8$) & $-28,8$ & $-128,112$ \\
 Terminal 2 (cost $4$) & $-204,188$ &  $-20,8$  
\end{tabular}
\caption{Utility Functions in Security Game $G_1$.}\label{utility figure}
\end{center}
\end{figure}

In real world examples of security games, the defender usually employs mixed strategies. For example, if the defender is using a strategy $75/25$, then he will spend $75\%$ of the time in Terminal 1 and $25\%$ of the time in Terminal 2. In practice, each morning the defender might get a randomly generated timetable that specifies at which terminal the defender should be at each time slot during the day~\cite{jtpkrto10interfaces}. The distinctive feature of security games compared to strategic games is {\em the asymmetry of information} between the players: the attacker knows the strategy employed by the defender but not vice versa. For example, while planning the attack, the attacker might visit the airport multiple times and observe that the defender spends $75\%$ of the time in Terminal 1 and $25\%$ of the time in Terminal 2. Thus, the attacker will know the mixed strategy used by the defender, but she will not know the location of the defender at the moment she plans to arrive at the airport on the day of the attack.

\begin{figure}[ht]
\begin{center}
\vspace{0mm}
\scalebox{0.48}{\includegraphics{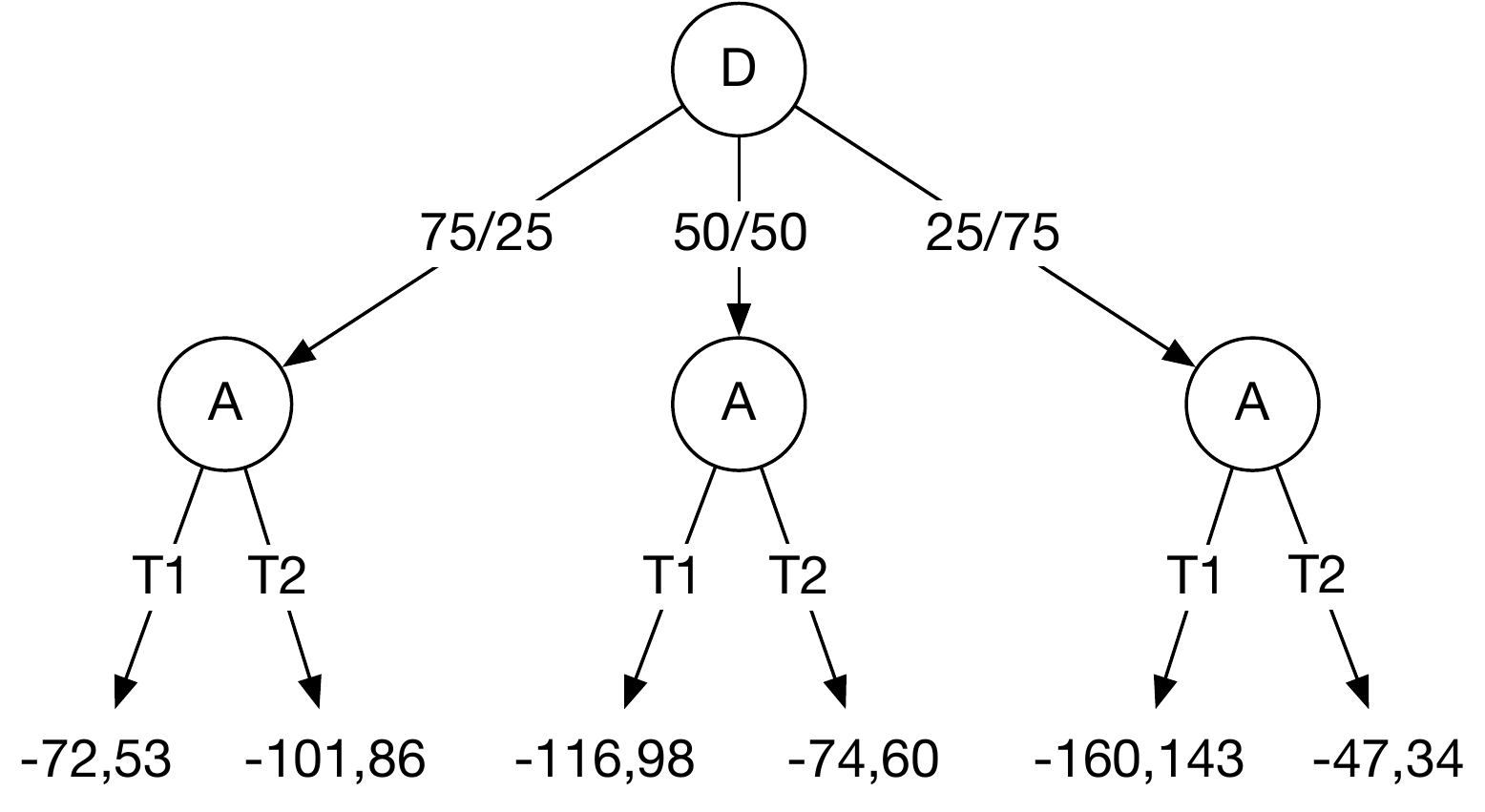}}
\vspace{0mm}
\caption{Security Game $G_1$ in Extensive Form.}\label{extensive-form-game figure}
\end{center}
\end{figure}

For the sake of simplicity, we assume that in game $G_1$ the defender must choose between only three given mixed strategies: $75/25$, $50/50$, and $25/75$. Then, game $G_1$ can be described as an extensive form game depicted in Figure~\ref{extensive-form-game figure}. The payoffs in this figure represent expected values of the utility functions. For example, suppose that the defender chooses the mixed strategy $75/25$ and the attacker chooses to attack Terminal 1. The pair $(75/25,T1)$ is called an {\em action profile} of game $G_1$. Under this action profile, the payoffs of the defender and the attacker are $-28$ and $8$, respectively, with probability $75\%$, and they are $-204$ and $188$, respectively, with probability $25\%$, see Figure~\ref{utility figure}. Thus, the {\em expected} payoff (or just ``payoff'') of the defender is
$$
75\%\times (-28)+25\%\times (-204)=-21 -51=-72
$$
and of the attacker is
$$
75\%\times 8+25\%\times 188=6+47=53.
$$
Suppose that the defender chooses a strategy $50/50$ and the attacker decides to target Terminal 2. Then, the attacker's payoff is $60$, see Figure~\ref{extensive-form-game figure}. We write this as
$$
(50/50,T2)\Vdash \mbox{``The attacker's payoff is $60$.''}.
$$

The attacker's mastermind might find this to be the attacker's fault and {\em blame} the attacker for the payoff not being at least 98. We capture the attacker's blameworthiness by
$$
(50/50,T2)\Vdash \A(\mbox{``The attacker's payoff is less than $98$.''}),
$$
where the blameworthiness modality $\A\phi$ stands for ``the attacker is blamable for $\phi$''. We define the blameworthiness using the well known Frankfurt's  principle\footnote{This principle has many limitations  that \cite{f69tjop} discusses; for example, when a person is coerced into something.} of alternative possibilities: {\em an agent is blamable for $\phi$ if $\phi$ is true and the agent could have prevented $\phi$}~\cite{f69tjop,w17}. In our case, the attacker, after learning that the  defender's strategy is 50/50, could have targeted Terminal 1, which would increase her payoff to $98$, see Figure~\ref{extensive-form-game figure}. The principle of alternative possibilities, sometimes referred to as ``counterfactual possibility''~\cite{c15cop}, is also used to define causality~\cite{lewis13,h16,bs18aaai}.

Next, assume that the defender still chooses the strategy $50/50$, but the attacker decided to target Terminal 1. Under this action profile, the payoff of the attacker is $98$, see Figure~\ref{extensive-form-game figure}. Although the payoff is less than the attacker's payoff of $143$ under the action profile $(25/75,T1)$, the attacker cannot be blamed for this:
$$
(50/50,T1)\!\Vdash\! \neg\A(\mbox{``The attacker's payoff is less than $143$.''}),
$$
because the attacker had no action in game $G_1$ to guarantee her payoff to be at least 143. At the same time, under the action profile $(25/75,T1)$, the defender is blameable for his payoff being less than $-101$:
$$
(50/50,T1)\!\Vdash\!\D(\mbox{``The defender's payoff is less than $-101$.''}),
$$
because the defender could have guaranteed his payoff to be at least $-101$ by choosing mixed strategy $75/25$, see Figure~\ref{extensive-form-game figure}. Following the principle of alternative possibilities, the blameworthiness modality $\D\phi$ stands for ``statement $\phi$ is true and the defender had a strategy to prevent it''.

In addition to the blameworthiness modalities $\A$ and $\D$, in this paper we also consider an auxiliary necessity modality $\N$. Statement $\N\phi$ stands for ``$\phi$ is true under each action profile of the given security game''. For example,
$$
(50/50,T1)\Vdash \N(\mbox{``The defender's payoff is negative.''}),
$$
because in game $G_1$ the defender's payoff is always negative. Surprisingly, as we show in Lemma~\ref{D through A lemma},  modality $\D$ can be expressed through modalities $\A$ and $\N$:
$$
\D\phi\equiv \phi\wedge \neg\N(\neg\phi\to\A\neg\phi).
$$
At the same time, we believe that modality $\A$ cannot be expressed through modalities $\D$ and $\N$, which reflects the {\em asymmetric} nature of security games.

In this paper we give a sound and complete axiomatization of the interplay between modalities $\A$ and $\N$ in security games. This work is related to our paper on blameworthiness in strategic games~(\citeyear{nt19aaai}). 
They proposed a sound and complete axiomatization of the interplay between the necessity modality $\N$ and the coalition blameworthiness modality $\B_C$ in strategic games. Their definition of the blameworthiness is also based on the principle of alternative possibilities.
Namely, $\B_C\phi$ stands for ``statement $\phi$ is true and coalition (a set of agents)  $C$ had a strategy to prevent it''. Thus, our modalities $\A\phi$ and $\D\phi$ correspond to their modalities $\B_{\{\mbox{\scriptsize attacker}\}}\phi$ and $\B_{\{\mbox{\scriptsize defender}\}}\phi$. 
In spite of this {\em syntactic similarity} between their and our works, the resulting axiomatic systems are quite different, which comes from the {\em semantic difference} between strategic games and security games. In security games, the attacker knows the defender's strategy while in a similar strategic game she would not. There are three aspects in which this work is different from~\cite{nt19aaai}: 
\begin{enumerate}
    \item As stated above, in security games modality $\D$ is expressible through modalities $\A$ and $\N$, while in strategic games modality $\B_{\{\mbox{\scriptsize defender}\}}$ is not expressible through modalities $ \B_{\{\mbox{\scriptsize attacker}\}}$ and $\N$.
    \item Two of our core axioms for modality $\A$, the Conjunction axiom and the No Blame axiom capture the asymmetry of information in security games. They are not sound in strategic games. The Fairness axiom from~\cite{nt19aaai} is not sound in our setting. We further discuss this in the Axioms section.
    \item The proof of the completeness is using a completely different construction from the one used in~\cite{nt19aaai}. This is discussed in section Completeness. 
\end{enumerate}    


\section{Syntax and Semantics}\label{syntax and semantics section}

In this paper we consider a fixed set of propositional variables $\sf Prop$. 
The language $\Phi$ of our logical system is defined by the grammar:
$
\phi := p\;|\;\neg\phi\;|\;\phi\to\phi\;|\;\N\phi\;|\; \A\phi.
$

As usual, we assume that connectives $\wedge$, $\vee$, and $\leftrightarrow$ are defined through connectives $\to$ and $\neg$ in the standard way. Next, we formally define security games (or just ``games'').

\begin{definition}\label{game definition}
A game is a tuple $(\mathcal{D},\{\mathcal{A}_d\}_{d\in\mathcal{D}},\pi)$, where
\begin{enumerate}
    \item set $\mathcal{D}$ is a set of actions of the defender,
    \item non-empty set $\mathcal{A}_d$ is a set of actions of the attacker in response to the action $d\in\mathcal{D}$ of the defender,
    \item valuation $\pi(p)$ of a propositional variable $p$ is an arbitrary set of pairs $(d,a)$ such that $d\in\mathcal{D}$ and $a\in\mathcal{A}_d$.
\end{enumerate}
\end{definition} 

In game $G_1$ from the introduction, the set of actions $\mathcal{D}$ of the defender is a three-element set $\{75/25,50/50,25/75\}$. For each action $d\in\mathcal{D}$ of the defender in this game, the set of responses $\mathcal{A}_d$ is the same two-element set $\{T1,T2\}$. Informally, $\pi(p)$ describes the set of action profiles $(d,a)$ under which statement $p$ is true.

The next definition is the core definition of our paper. Its item 5 defines blameworthiness of the attacker in security games using the principle of alternative possibilities~\cite{f69tjop,w17}: the attacker is blamable for statement $\phi$ under action profile $(d,a)$ if $\phi$ is true under this profile and the attacker had an opportunity to prevent $\phi$.

\begin{definition}\label{sat} 
For any action $d\in\mathcal{D}$ of the defender and any response action $a\in\mathcal{A}_d$ of the attacker in a game $(\mathcal{D},\{\mathcal{A}_d\}_{d\in\mathcal{D}},\pi)$ and any formula $\phi\in\Phi$, the satisfiability relation $(d,a)\Vdash\phi$ is defined recursively as follows:
\begin{enumerate}
    \item $(d,a)\Vdash p$ if $(d,a)\in \pi(p)$, where $p\in {\sf Prop}$,
    \item $(d,a)\Vdash \neg\phi$ if $(d,a)\nVdash \phi$,
    \item $(d,a)\Vdash\phi\to\psi$ if $(d,a)\nVdash\phi$ or $(d,a)\Vdash\psi$,
    \item $(d,a)\Vdash\N\phi$ if $(d',a')\Vdash\phi$ for each action $d'\in\mathcal{D}$ of the defender and each response action $a'\in\mathcal{A}_{d'}$ of the attacker,
    \item $(d,a)\Vdash\A\phi$ if $(d,a)\Vdash\phi$ and there is a response action $a'\in\mathcal{A}_d$ of the attacker such that  $(d,a')\nVdash\phi$.
\end{enumerate}
\end{definition}

As defined above, language $\Phi$ includes the attacker's blameworthiness modality $\A$, but does not include the defender's blameworthiness modality $\D$. If modality $\D$ is added to language $\Phi$ to form language $\Phi^+$, then Definition~\ref{sat} would need to be extended by an additional item:
\begin{enumerate}
{\em 
  \setcounter{enumi}{5}
  \item $(d,a)\Vdash\D\phi$ if $(d,a)\Vdash\phi$ and there is an action $d'\in\mathcal{D}$ of the defender such that for each response action $a'\in \mathcal{A}_{d'}$ of the attacker, $(d',a')\nVdash\phi$.}
\end{enumerate}
As mentioned in the introduction, we do not include modality $\D$ into language $\Phi$ because it is expressible through modalities $\A$ and $\N$. Indeed, the following lemma holds for any formula $\phi\in\Phi^+$: 

\begin{lemma}\label{D through A lemma}
$(d,a)\Vdash \D\phi$ iff  $(d,a)\Vdash \phi\wedge \neg\N(\neg\phi\to\A\neg\phi)$.
\end{lemma}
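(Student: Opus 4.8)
The plan is to unfold the semantics of the right-hand side formula one modality at a time, using Definition~\ref{sat}, and observe that it collapses to exactly the two conditions in the definition of $(d,a)\Vdash\D\phi$. Since both sides carry the conjunct $(d,a)\Vdash\phi$, it suffices to establish the biconditional
$$
(d,a)\Vdash\neg\N(\neg\phi\to\A\neg\phi)
\quad\Longleftrightarrow\quad
\exists\, d'\in\mathcal{D}\ \forall\, a'\in\mathcal{A}_{d'}\colon (d',a')\nVdash\phi,
$$
whose right-hand side is the second clause in the defining condition of $\D\phi$.

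I would obtain this through a chain of equivalences. First, by item~4 of Definition~\ref{sat} (negated), $(d,a)\nVdash\N(\neg\phi\to\A\neg\phi)$ holds exactly when there is a profile $(d',a')$ with $(d',a')\nVdash\neg\phi\to\A\neg\phi$; and by items~2 and~3 this means $(d',a')\Vdash\neg\phi$ together with $(d',a')\nVdash\A\neg\phi$. The crux is the remaining clause $(d',a')\nVdash\A\neg\phi$. By item~5, the formula $\A\neg\phi$ asserts the \emph{conjunction} ``$(d',a')\Vdash\neg\phi$ and some response in $\mathcal{A}_{d'}$ makes $\neg\phi$ fail''; hence its negation is the \emph{disjunction} ``$(d',a')\nVdash\neg\phi$, or every response makes $\neg\phi$ hold''. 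Because we already know $(d',a')\Vdash\neg\phi$, the first disjunct is excluded, and the negation collapses cleanly to: for every $a''\in\mathcal{A}_{d'}$, $(d',a'')\Vdash\neg\phi$, i.e.\ $(d',a'')\nVdash\phi$.

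Finally I would convert the resulting ``there is a profile $(d',a')$ with $(d',a')\Vdash\neg\phi$ and $(d',a'')\nVdash\phi$ for all $a''$'' into the cleaner ``there is an action $d'$ with $(d',a'')\nVdash\phi$ for all $a''\in\mathcal{A}_{d'}$''. From left to right this is immediate: one simply discards $a'$, and the clause $(d',a')\Vdash\neg\phi$ is redundant since it is the special case $a''=a'$ of the universal statement. From right to left, given such a $d'$ I invoke the non-emptiness of $\mathcal{A}_{d'}$ (item~2 of Definition~\ref{game definition}) to pick a witness response $a'\in\mathcal{A}_{d'}$, for which $(d',a')\Vdash\neg\phi$ holds automatically, thereby rebuilding the falsifying profile that $\neg\N$ requires. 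Matching the outcome against item~6's definition of $\D\phi$ then closes the argument.

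The step I expect to be the main obstacle is the treatment of $(d',a')\nVdash\A\neg\phi$: because $\A$ is defined by a conjunction, its negation is a disjunction, and the whole proof hinges on noticing that the ambient premise $(d',a')\Vdash\neg\phi$ eliminates one disjunct and reduces the condition to a single universal quantifier over $\mathcal{A}_{d'}$. The only other point demanding care is remembering the non-emptiness assumption, which is exactly what guarantees a witness response exists when reconstructing the profile in the right-to-left direction.
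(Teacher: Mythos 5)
Your proposal is correct and follows essentially the same route as the paper: both arguments unfold the semantics of $\neg\N(\neg\phi\to\A\neg\phi)$ item by item, use the non-emptiness of $\mathcal{A}_{d'}$ to produce a witness response, and exploit the fact that $(d',a')\Vdash\neg\phi$ together with $(d',a')\nVdash\A\neg\phi$ forces $\neg\phi$ at every response to $d'$. The only difference is presentational -- you package everything as a single chain of equivalences, whereas the paper proves the two implications separately (the forward one by contraposition with a two-case split that your equivalence formulation renders unnecessary).
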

\begin{proof}
$(\Rightarrow):$ Suppose that $(d,a)\nVdash \phi\wedge \neg\N(\neg\phi\to\A\neg\phi)$. Thus, either $(d,a)\nVdash \phi$ or $(d,a) \Vdash\N(\neg\phi\to\A\neg\phi)$. In the first case, $(d,a)\nVdash\D\phi$ by item 6 above.

Next assume that $(d,a) \Vdash\N(\neg\phi\to\A\neg\phi)$. By item 6, to prove $(d,a)\nVdash \D\phi$, it suffices to show that for any action $d'\in\mathcal{D}$ of the defender there is a response action $a'\in\mathcal{A}_{d'}$ of the attacker, such that $(d',a')\Vdash\phi$. Indeed, consider any action $d'\in\mathcal{D}$ of the defender. By Definition~\ref{game definition}, set $\mathcal{A}_{d'}$ is not empty. Let $a_1\in \mathcal{A}_{d'}$ be an arbitrary response action of the attacker  on action $d'$. Assumption $(d,a) \Vdash\N(\neg\phi\to\A\neg\phi)$, by item 4 of Definition~\ref{sat}, implies $(d',a_1)\Vdash\neg\phi\to\A\neg\phi$. We consider the following two cases separately:

\noindent{\bf Case I:} $(d',a_1)\Vdash\phi$. Then, choose the response action $a'$ to be $a_1$ to have $(d',a')\Vdash\phi$.

\noindent{\bf Case II:} $(d',a_1)\nVdash\phi$. Thus, $(d',a_1)\Vdash\neg\phi$ by item 2 of Definition~\ref{sat}. Hence, $(d',a_1)\Vdash\A\neg\phi$ by item 3 of Definition~\ref{sat} because $(d',a_1)\Vdash\neg\phi\to\A\neg\phi$. Thus, by item 5 of Definition~\ref{sat}, there is a response action $a_2\in\mathcal{A}_{d'}$ of the attacker such that $(d',a_2)\nVdash\neg\phi$. Hence, $(d',a_2)\Vdash\phi$ by item 2 of Definition~\ref{sat}. Then, choose the response action $a'$ to be $a_2$ to have $(d',a')\Vdash\phi$.

\noindent$(\Leftarrow):$ Suppose that $(d,a) \Vdash \phi\wedge \neg\N(\neg\phi\to\A\neg\phi)$. Thus, 
\begin{equation}\label{DA eq}
    (d,a) \Vdash \phi
\end{equation}
and  $(d,a) \nVdash\N(\neg\phi\to\A\neg\phi)$. The latter, by item 4 of Definition~\ref{sat}, implies that there is an action $d'\in\mathcal{D}$ of the defender and a response action $a'\in\mathcal{A}_{d'}$ of the attacker such that $(d',a') \nVdash\neg\phi\to\A\neg\phi$. Thus, $(d',a') \Vdash\neg\phi$ and $(d',a') \nVdash\A\neg\phi$ by item 3 of Definition~\ref{sat}. Then, $(d',a'') \Vdash\neg\phi$ for each response action $a''\in\mathcal{A}_{d'}$ of the attacker, by item 5 of Definition~\ref{sat}. 
Thus, $(d',a'') \nVdash\phi$ for each response action $a''\in\mathcal{A}_{d'}$ of the attacker, by item 2 of Definition~\ref{sat}. 
Hence, there exists an action $d'\in\mathcal{D}$ of the defender such that $(d',a'') \nVdash\phi$ for each response action $a''\in\mathcal{A}_{d'}$ of the attacker.
Therefore, statement~(\ref{DA eq}) implies $(d,a) \Vdash \D\phi$ by item 6 above.
\end{proof}

\section{Axioms}\label{axioms section}

In addition to the propositional tautologies in  language $\Phi$, our logical system contains the following axioms.

\begin{enumerate}
\item Truth: $\Box\phi\to\phi$, where $\Box\in\{\N,\A\}$,
\item Negative Introspection: $\neg\N\phi\to\N\neg\N\phi$,
\item Distributivity: 
$\N(\phi\to\psi)\to(\N\phi\to\N\psi)$,
\item Unavoidability: $\N\phi\to\neg\A\phi$,
\item Strict Conditional:  $\N(\phi\to\psi)\to(\A\psi\to(\phi\to \A\phi))$,
\item Conjunction: $\A(\phi\wedge\psi)\to(\A\phi\vee\A\psi)$,
\item No Blame: $\neg\A(\phi\to\A\phi)$.
\end{enumerate}

The Truth (for $\N$), the Negative Introspection, and the Distributivity axioms are the well known S5 properties of the necessity modality $\N$. The Truth axiom for modality $\A$ states that the attacker can only be blamed for something true. The Unavoidability axiom states that the attacker cannot be blamed for something that could not be prevented. 

The Strict Conditional axiom states that if statement $\psi$ is true under each action profile where $\phi$ is true, the attacker is blameable for $\psi$, and $\phi$ is true, then the attacker is also blamable for $\phi$. Indeed, because statement $\psi$ is true under each action profile where $\phi$ is true, any action of the attacker that prevents $\psi$ also prevents $\phi$. Hence, if the attacker is blameable for $\psi$ and $\phi$ is true, then the attacker is also blamable for $\phi$. We formalize this argument in Lemma~\ref{strict conditional soundness}.

The Truth axiom, the Unavoidability axiom, and the Strict Conditional axiom hold not only for modality $\A$, but for modality $\D$ as well. These axioms are also true for strategic games. 

The Conjunction and the No Blame axioms are the key axioms of our logical system. They capture the {\em asymmetry of information} in security games. Both of these axioms are true for the attacker's blameworthiness modality $\A$ -- their soundness is proven in the appendix. However, as Lemma~\ref{conjunction lemma axioms section} and Lemma~\ref{no blame lemma axioms section} show, they are not true for the defender's blameworthiness modality $\D$ in game $G_2$ depicted in Figure~\ref{conjunction-axiom figure}. Lemma~\ref{aux lemma axioms section} is an auxiliary statement about game $G_2$ used in the proofs of these two lemmas.

\begin{figure}[ht]
\begin{center}
\vspace{0mm}
\scalebox{0.48}{\includegraphics{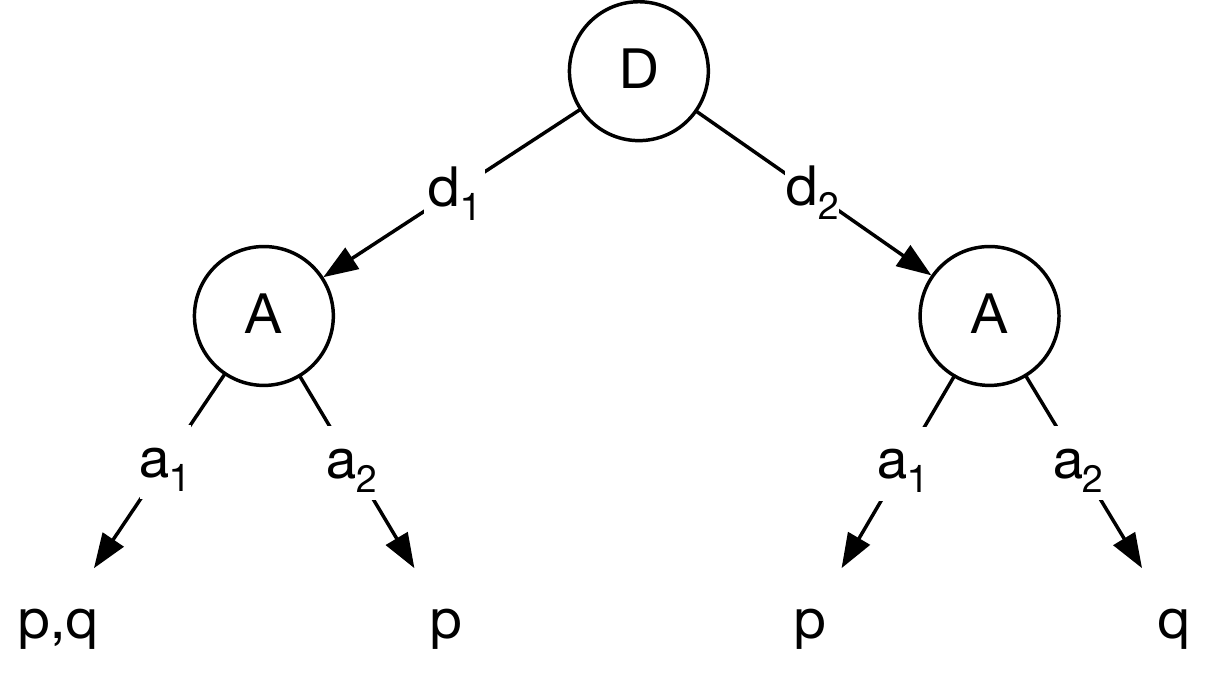}}
\vspace{0mm}
\caption{Game $G_2$, where $(d_1,a_1)\nVdash \D(p\wedge q)\to(\D p\vee\D q)$, $(d_2,a_2)\Vdash \D(p\to\D p)$, and $(d_2,a_1)\nVdash \A p \to \N(p\to \A p)$.}\label{conjunction-axiom figure}
\end{center}
\vspace{-3mm}
\end{figure}

\begin{lemma}\label{aux lemma axioms section}
$(d,a)\nVdash\D p$ and $(d,a)\nVdash\D q$ for each action $d$ of the defender and each response action $a$ of the attacker in game $G_2$.
\end{lemma}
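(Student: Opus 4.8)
The plan is to work directly from item 6 of the extended Definition~\ref{sat}, which says that $(d,a)\Vdash\D\phi$ holds precisely when $(d,a)\Vdash\phi$ and there is a defender action $d'$ under which $\phi$ fails for \emph{every} attacker response. Negating this, $(d,a)\nVdash\D p$ holds exactly when either $(d,a)\nVdash p$ or, for every defender action $d'$, there is some response $a'$ with $(d',a')\Vdash p$. The crucial observation is that the second disjunct does not mention the profile $(d,a)$ at all: it asserts that the defender can never force $p$ to be false. The same reformulation applies verbatim to $q$.

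So the proof reduces to a single, profile-independent claim about $G_2$: for each defender action $d'$ there is at least one response $a'$ with $(d',a')\Vdash p$, and likewise for $q$. Once this is established, the witnessing defender action required by item 6 simply does not exist, so $\D p$ and $\D q$ fail at \emph{every} profile at once. To verify the reduced claim I would read the valuations of $p$ and $q$ off Figure~\ref{conjunction-axiom figure} and check, defender action by defender action, that $p$ (and separately $q$) is made true by at least one response. Concretely, $p$ is achievable under $d_1$ at $(d_1,a_1)$, where $p\wedge q$ and hence $p$ holds (as recorded in the figure caption), and under $d_2$ at whichever response the figure marks with $p$; the analogous check for $q$ uses $(d_1,a_1)$ together with the $q$-satisfying response under $d_2$. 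Since $G_2$ has only finitely many defender actions and responses, this is a routine finite inspection.

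The one conceptual step -- and the only thing that might look like an obstacle -- is the quantifier structure: the statement is universally quantified over all profiles $(d,a)$, yet it suffices to verify a condition quantified only over defender actions. Recognizing that the failure of $\D p$ at every profile is equivalent to the defender's inability ever to prevent $p$ is what collapses the universal claim into a short check; after that the argument is a mechanical reading of the figure.
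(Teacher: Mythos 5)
Your proposal is correct and matches the paper's own argument: the paper likewise reduces the claim to the profile-independent fact that every defender action admits a response making $p$ true (witnessed by $(d_1,a_1)\Vdash p$ and $(d_2,a_1)\Vdash p$) and similarly for $q$ (witnessed by $(d_1,a_1)\Vdash q$ and $(d_2,a_2)\Vdash q$), then concludes via item 6 after Definition~\ref{sat}. The only thing missing from your write-up is the explicit reading of the figure, which confirms exactly the witnesses you anticipated.
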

\begin{proof}
Note that  $(d_1,a_1)\Vdash p$ and $(d_2,a_1)\Vdash p$, see Figure~\ref{conjunction-axiom figure}. Thus, for each action $d'$ of the defender there is an action $a'$ of the attacker such that $(d',a')\Vdash p$. Hence, $(d,a)\nVdash \D p$ by item 6 after Definition~\ref{sat}. Similarly, $(d_1,a_1)\Vdash q$ and $(d_2,a_2)\Vdash q$ imply that $(d,a)\nVdash \D q$.
\end{proof}

\begin{lemma}\label{conjunction lemma axioms section}
$(d_1,a_1)\nVdash \D(p\wedge q)\to(\D p\vee\D q)$.
\end{lemma}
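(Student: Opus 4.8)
The plan is to unfold the semantics of the outer implication and then combine Lemma~\ref{aux lemma axioms section} with a direct verification that the defender can prevent the conjunction $p\wedge q$ even though he cannot prevent $p$ or $q$ individually.

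First, by item 3 of Definition~\ref{sat}, the claim $(d_1,a_1)\nVdash \D(p\wedge q)\to(\D p\vee\D q)$ is equivalent to the two statements $(d_1,a_1)\Vdash \D(p\wedge q)$ and $(d_1,a_1)\nVdash \D p\vee\D q$, so I would split the proof into these two goals. The second goal is immediate from Lemma~\ref{aux lemma axioms section}: that lemma gives $(d_1,a_1)\nVdash \D p$ and $(d_1,a_1)\nVdash \D q$, and since $\vee$ is defined through $\neg$ and $\to$, items 2 and 3 of Definition~\ref{sat} then yield $(d_1,a_1)\nVdash \D p\vee\D q$.

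For the first goal I would invoke item 6 after Definition~\ref{sat}, which requires two things. The first is $(d_1,a_1)\Vdash p\wedge q$, which follows by reading off Figure~\ref{conjunction-axiom figure} that both $(d_1,a_1)\Vdash p$ and $(d_1,a_1)\Vdash q$ (these are exactly the facts already used in the proof of Lemma~\ref{aux lemma axioms section}). The second is the existence of a defender action $d'$ such that $(d',a')\nVdash p\wedge q$ for every attacker response $a'\in\mathcal{A}_{d'}$. Here I would take $d'=d_2$ as the witness and check from the figure that under each attacker response to $d_2$ at most one of $p$ and $q$ holds, namely $(d_2,a_1)\Vdash p$ while $(d_2,a_1)\nVdash q$, and $(d_2,a_2)\Vdash q$ while $(d_2,a_2)\nVdash p$; hence $p\wedge q$ fails throughout $\mathcal{A}_{d_2}$, and item 6 delivers $(d_1,a_1)\Vdash \D(p\wedge q)$.

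The conceptual heart of the argument, and the step I expect to require the most care, is this last verification: the defender cannot force $\neg p$ or $\neg q$ separately (which is precisely Lemma~\ref{aux lemma axioms section}), yet by committing to $d_2$ he can force $\neg(p\wedge q)$, because along $d_2$ the variables $p$ and $q$ are never simultaneously true. Once the witness $d_2$ is identified, reading the truth values off the game tree is routine, but it is exactly this asymmetry between preventing a conjunction and preventing its conjuncts that makes $G_2$ a counterexample to the Conjunction axiom for modality $\D$.
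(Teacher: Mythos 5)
Your proposal is correct and follows essentially the same route as the paper's proof: it reduces the claim via Lemma~\ref{aux lemma axioms section} to showing $(d_1,a_1)\Vdash \D(p\wedge q)$, and then verifies this by item 6 after Definition~\ref{sat} using $d_2$ as the witness action under which $p\wedge q$ fails for both attacker responses. The only difference is that you spell out the unfolding of the implication and disjunction explicitly, which the paper leaves implicit.
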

\begin{proof}
By Lemma~\ref{aux lemma axioms section}, it suffices to show that $(d_1,a_1)\Vdash \D(p\wedge q)$. Indeed, observe that $(d_2,a_1)\nVdash p\wedge q$ and $(d_2,a_2)\nVdash p\wedge q$, see Figure~\ref{conjunction-axiom figure}. Thus, $(d_2,a)\nVdash p\wedge q$ for each response action $a$ of the attacker on action $d_2$ of the defender. Also, $(d_1,a_1)\Vdash p\wedge q$, see Figure~\ref{conjunction-axiom figure}. Therefore, $(d_1,a_1)\Vdash \D(p\wedge q)$ by item 6 after Definition~\ref{sat}.
\end{proof}

\begin{lemma}\label{no blame lemma axioms section}
$(d_2,a_2)\Vdash \D(p\to\D p)$.
\end{lemma}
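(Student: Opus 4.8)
The plan is to verify directly the two clauses in the definition of $\D$ (item~6 after Definition~\ref{sat}) applied to the formula $p\to\D p$ at the profile $(d_2,a_2)$. The single observation that drives the whole argument is that, by Lemma~\ref{aux lemma axioms section}, we have $(d,a)\nVdash\D p$ for \emph{every} profile $(d,a)$ of game $G_2$. Hence the consequent of $p\to\D p$ is false throughout $G_2$, so by item~3 of Definition~\ref{sat} the implication $p\to\D p$ holds at a profile exactly when $p$ fails there.

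First I would establish clause~1, namely $(d_2,a_2)\Vdash p\to\D p$. Reading the valuation in Figure~\ref{conjunction-axiom figure} (equivalently, combining $(d_2,a_2)\Vdash q$ from Lemma~\ref{aux lemma axioms section} with $(d_2,a_2)\nVdash p\wedge q$ from the proof of Lemma~\ref{conjunction lemma axioms section}) gives $(d_2,a_2)\nVdash p$. Thus the antecedent of $p\to\D p$ is false, and the implication holds vacuously at $(d_2,a_2)$.

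Next I would supply the defender's witnessing action for clause~2 by taking $d'=d_1$, and check that $(d_1,a')\nVdash p\to\D p$ for every response $a'\in\mathcal{A}_{d_1}$. Indeed, Figure~\ref{conjunction-axiom figure} shows $(d_1,a')\Vdash p$ for each such $a'$, while $(d_1,a')\nVdash\D p$ by Lemma~\ref{aux lemma axioms section}; so the antecedent of $p\to\D p$ is true and its consequent false, and the implication fails at every profile under $d_1$. Together with clause~1, item~6 after Definition~\ref{sat} then delivers $(d_2,a_2)\Vdash\D(p\to\D p)$.

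I expect no genuine obstacle here beyond careful bookkeeping. The only point that requires attention is the collapse of $p\to\D p$ to $\neg p$, which follows from the unsatisfiability of $\D p$ across $G_2$; once that is in hand, the verification reduces to confirming that $d_1$ is a defender action under which $p$ (and hence the falsity of $p\to\D p$) is forced on all of the attacker's responses, which is exactly what the figure records.
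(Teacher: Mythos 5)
Your proposal is correct and follows essentially the same route as the paper's own proof: both arguments use Lemma~\ref{aux lemma axioms section} to conclude that $\D p$ fails everywhere, then read off from Figure~\ref{conjunction-axiom figure} that $p$ holds at both profiles under $d_1$ (so $p\to\D p$ fails there) while $p$ fails at $(d_2,a_2)$ (so $p\to\D p$ holds there vacuously), and finally apply item~6 after Definition~\ref{sat} with $d_1$ as the witnessing defender action. The only difference is cosmetic: you frame the observation as a general collapse of $p\to\D p$ to $\neg p$ across $G_2$, whereas the paper checks the two relevant profiles directly.
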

\begin{proof}
$(d_1,a_1)\nVdash \D p$ and $(d_1,a_2)\nVdash \D p$ by Lemma~\ref{aux lemma axioms section}. Thus, $(d_1,a_1)\nVdash p\to \D p$ and $(d_1,a_2)\nVdash p\to \D p$ by item 3 of Definition~\ref{sat} and because $(d_1,a_1)\Vdash p$ and $(d_1,a_2)\Vdash p$, see Figure~\ref{conjunction-axiom figure}. Thus, $(d_1,a)\nVdash p\to \D p$ for each response action $a$ of the attacker on action $d$ of the defender. At the same time, $(d_2,a_2)\Vdash p\to\D p$ by item 3 of Definition~\ref{sat} because $(d_2,a_2)\nVdash p$, see Figure~\ref{conjunction-axiom figure}. Therefore, $(d_2,a_2)\Vdash \D(p\to\D p)$ by item 6 after Definition~\ref{sat}.
\end{proof}

Informally, the Conjunction and the No Blame axioms capture the properties of the asymmetry of the information in security games and thus they cannot be true in strategic games~\cite{nt19aaai} where the information is symmetric. A strategic game in which these axioms fail could be constructed by modifying the security game $G_2$ into a strategic game. 

The logical system for blameworthiness in strategic games~\cite{nt19aaai} includes the Fairness axiom: $\B_C\phi\to\N(\phi\to\B_C\phi)$. In the next two lemmas we show that in the case of security games this axiom is not sound for modality $\A$, but is sound for modality $\D$.

\begin{lemma}
$(d_2,a_1)\nVdash \A p \to \N(p\to \A p)$ in game $G_2$.
\end{lemma}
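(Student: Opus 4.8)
The plan is to negate the implication directly via item~3 of Definition~\ref{sat}: to establish $(d_2,a_1)\nVdash \A p \to \N(p\to \A p)$ it suffices to show that the antecedent holds, $(d_2,a_1)\Vdash \A p$, while the consequent fails, $(d_2,a_1)\nVdash \N(p\to\A p)$. I would treat these two conjuncts separately, reading the relevant truth values of $p$ off Figure~\ref{conjunction-axiom figure}; note that exactly these values are already recorded in the proofs of Lemma~\ref{aux lemma axioms section} and Lemma~\ref{no blame lemma axioms section}, so no new information about $G_2$ is needed.

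For the antecedent I would invoke item~5 of Definition~\ref{sat}. Since $(d_2,a_1)\Vdash p$ and there is a response action, namely $a_2$, with $(d_2,a_2)\nVdash p$, the attacker could have prevented $p$ under the defender's action $d_2$. Hence $(d_2,a_1)\Vdash \A p$.

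For the consequent I would use item~4 of Definition~\ref{sat}, which requires $p\to\A p$ to hold under \emph{every} action profile; therefore it is enough to exhibit a single profile where it fails. The profile $(d_1,a_1)$ serves as such a witness: we have $(d_1,a_1)\Vdash p$, yet both responses to $d_1$ satisfy $p$ (that is, $(d_1,a_1)\Vdash p$ and $(d_1,a_2)\Vdash p$), so by item~5 the attacker has no response under $d_1$ that prevents $p$, giving $(d_1,a_1)\nVdash \A p$. Consequently $(d_1,a_1)\nVdash p\to\A p$ by item~3, and $(d_2,a_1)\nVdash \N(p\to\A p)$ follows by item~4.

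Combining the two parts through item~3 of Definition~\ref{sat} yields the claim $(d_2,a_1)\nVdash \A p\to\N(p\to\A p)$. I do not anticipate any real obstacle beyond correctly identifying the four relevant truth values of $p$ in the figure; the only conceptually delicate point is recognizing that refuting $\N$ demands just one offending profile, and that $d_1$ — where $p$ holds under all of the attacker's responses — is precisely the profile at which the attacker is \emph{not} to blame, which is exactly what breaks this Fairness-style implication and thereby illustrates the asymmetry of information in security games.
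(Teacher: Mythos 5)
Your proposal is correct and is essentially the paper's own argument: both establish $(d_2,a_1)\Vdash \A p$ from $(d_2,a_1)\Vdash p$ and $(d_2,a_2)\nVdash p$, and both use the profile $(d_1,a_1)$ — where $p$ holds under every response to $d_1$, so $(d_1,a_1)\nVdash\A p$ — to defeat $\N(p\to\A p)$. The only cosmetic difference is that the paper refutes the consequent by contradiction while you refute it directly by exhibiting the witness profile.
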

\begin{proof}
Note that $(d_2,a_1)\Vdash p$ and $(d_2,a_2)\nVdash p$, see Figure~\ref{conjunction-axiom figure}. Thus, $(d_2,a_1)\Vdash \A p$ by item 5 of Definition~\ref{sat}. Suppose that $(d_2,a_1)\Vdash \A p \to \N(p\to \A p)$. Hence, $(d_2,a_1)\Vdash \N(p\to \A p)$ by item 3 of Definition~\ref{sat}. Thus, $(d_1,a_1)\Vdash p\to \A p$ by item 4 of Definition~\ref{sat}. Note that $(d_1,a_1)\Vdash p$, see Figure~\ref{conjunction-axiom figure}. Hence, $(d_1,a_1)\Vdash \A p$ by item 3 of Definition~\ref{sat}. Then, by item 5 of Definition~\ref{sat}, there must exists a response action $a'\in\mathcal{D}_{d_1}$ of the attacker such that  $(d_1,a')\nVdash p$. However, such an action $a'$ does not exist because $(d_1,a_1)\Vdash p$ and $(d_1,a_2)\Vdash p$, see Figure~\ref{conjunction-axiom figure}.
\end{proof}

\begin{lemma}
$(d,a)\Vdash \D\phi\to\N(\phi\to\D\phi)$ for any formula $\phi\in \Phi^+$, any defender's action $d\in\mathcal{D}$, and any attacker's response action $a\in \mathcal{A}_d$ in an arbitrary security game $(\mathcal{D},\{\mathcal{A}_d\}_{d\in\mathcal{D}},\pi)$.
\end{lemma}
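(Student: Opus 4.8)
The plan is to unfold the semantics of $\D$ and $\N$ and to exploit the observation that the defender's \emph{witnessing action} in item 6 after Definition~\ref{sat} does not depend on the current action profile. Concretely, I would begin by assuming $(d,a)\Vdash\D\phi$; otherwise the implication $\D\phi\to\N(\phi\to\D\phi)$ holds vacuously by item 3 of Definition~\ref{sat}. By item 6, this assumption gives both $(d,a)\Vdash\phi$ and, crucially, a single defender action $d^*\in\mathcal{D}$ such that $(d^*,a')\nVdash\phi$ for \emph{every} response action $a'\in\mathcal{A}_{d^*}$ of the attacker.

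Next I would reduce the goal $(d,a)\Vdash\N(\phi\to\D\phi)$ to a universally quantified claim: by item 4 of Definition~\ref{sat}, it suffices to show $(d',a')\Vdash\phi\to\D\phi$ for an arbitrary action $d'\in\mathcal{D}$ of the defender and an arbitrary response action $a'\in\mathcal{A}_{d'}$ of the attacker. I would then split on whether $\phi$ holds at this profile. If $(d',a')\nVdash\phi$, then $(d',a')\Vdash\phi\to\D\phi$ immediately by item 3. The substantive case is $(d',a')\Vdash\phi$, where I must establish $(d',a')\Vdash\D\phi$.

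Here the key step is that the defender action $d^*$ obtained from the original assumption serves as a witness \emph{at the new profile} $(d',a')$ as well: I already have $(d',a')\Vdash\phi$, and the same $d^*$ satisfies $(d^*,a'')\nVdash\phi$ for every $a''\in\mathcal{A}_{d^*}$, so item 6 yields $(d',a')\Vdash\D\phi$, completing the case. The heart of the argument is recognizing that the second conjunct of the definition of $\D\phi$ --- the existence of a defender action that avoids $\phi$ against all attacker responses --- is a global, profile-independent condition. This is precisely what fails for modality $\A$, whose witnessing response must be drawn from $\mathcal{A}_d$ and therefore depends on the defender action $d$ fixed in the current profile, explaining why Fairness is unsound for $\A$ yet sound for $\D$.

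The proof presents no serious obstacle beyond carefully tracking the quantifier structure in item 6; the one point that must be stated explicitly is that $d^*$ can be reused verbatim across profiles, since nothing in the witnessing clause refers to $(d,a)$. If I wanted the statement to cover all $\phi\in\Phi^+$ uniformly, I would note that the argument above treats $\phi$ as an arbitrary formula and invokes only the recursive clauses for $\neg$, $\to$, $\N$, and $\D$, so no induction on the structure of $\phi$ is required.
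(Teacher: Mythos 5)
Your proof is correct and rests on the same key observation as the paper's: the second conjunct of item 6 after Definition~\ref{sat} (the existence of a defender action $d^*$ with $(d^*,a')\nVdash\phi$ for all $a'\in\mathcal{A}_{d^*}$) makes no reference to the current profile and therefore transfers verbatim to any other profile where $\phi$ holds. The paper argues by contradiction while you argue directly, but this is merely the contrapositive presentation of the identical idea, so the two proofs are essentially the same.
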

\begin{proof}
Suppose that $(d,a)\nVdash \D\phi\to\N(\phi\to\D\phi)$. Thus, $(d,a)\Vdash \D\phi$ and $(d,a)\nVdash \N(\phi\to\D\phi)$ by item 3 of Definition~\ref{sat}. By item 6 after Definition~\ref{sat}, statement $(d,a)\Vdash \D\phi$, implies that $(d,a)\Vdash\phi$.

By item 4 of  Definition~\ref{sat}, statement $(d,a)\nVdash \N(\phi\to\D\phi)$ implies that there is an action $d_1\in\mathcal{D}$ of the defender and a response action $a_1\in\mathcal{A}_{d_1}$ of the attacker such that $(d_1,a_1)\nVdash \phi\to\D\phi$. Thus, $(d_1,a_1)\Vdash \phi$ and $(d_1,a_1)\nVdash \D\phi$ by item 3 of  Definition~\ref{sat}. Hence, by item 6 after Definition~\ref{sat}, for each action $d'\in\mathcal{D}$ of the defender there is a response action $a'\in\mathcal{A}_{d'}$ of the attacker such that $(d',a')\Vdash \phi$. Then, $(d,a)\nVdash \D\phi$ by item 6 after Definition~\ref{sat} because $(d,a)\Vdash \phi$, which is a contradiction.
\end{proof}

We write $\vdash\phi$ if formula $\phi$ is provable from the axioms of our system using the Modus Ponens and
the Necessitation inference rules:
$$
\dfrac{\phi,\;\;\;\phi\to\psi}{\psi},
\hspace{20mm}
\dfrac{\phi}{\N\phi}.
$$
We write $X\vdash\phi$ if formula $\phi$ is provable from the theorems of our logical system and an additional set of axioms $X$ using only the Modus Ponens inference rule.

We conclude this section with an example of a formal proof in our logical system. The lemma below is used later in the proof of the completeness.

\begin{lemma}\label{biconditional lemma}
If $\vdash\phi\leftrightarrow\psi$, then $\vdash\A\phi\to\A\psi$.
\end{lemma}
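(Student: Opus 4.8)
The plan is to derive $\A\phi\to\A\psi$ directly from the hypothesis $\vdash\phi\leftrightarrow\psi$ by using the Strict Conditional axiom as the main engine, with the Truth axiom for $\A$ and the Necessitation rule in supporting roles. The key observation is that the Strict Conditional axiom, once we instantiate it with the roles of $\phi$ and $\psi$ interchanged, reads $\N(\psi\to\phi)\to(\A\phi\to(\psi\to\A\phi))$ --- more precisely $\N(\psi\to\phi)\to(\A\phi\to(\psi\to\A\psi))$ --- and this is exactly what transports blameworthiness from $\phi$ to $\psi$, provided we can supply its antecedent $\N(\psi\to\phi)$ and afterwards strip off the residual hypothesis $\psi$.

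First I would extract from $\vdash\phi\leftrightarrow\psi$ the two one-directional implications $\vdash\phi\to\psi$ and $\vdash\psi\to\phi$ by propositional reasoning. Applying the Necessitation rule to the second yields $\vdash\N(\psi\to\phi)$. Instantiating the Strict Conditional axiom so that its conclusion mentions $\psi\to\A\psi$, and discharging the antecedent $\N(\psi\to\phi)$ by Modus Ponens, gives $\vdash\A\phi\to(\psi\to\A\psi)$.

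Next I would obtain the auxiliary fact $\vdash\A\phi\to\psi$. This follows from the Truth axiom for $\A$, namely $\vdash\A\phi\to\phi$, chained with the provable implication $\vdash\phi\to\psi$ from the first step via propositional reasoning. Finally, combining $\vdash\A\phi\to(\psi\to\A\psi)$ with $\vdash\A\phi\to\psi$ through the propositional tautology $(\alpha\to(\beta\to\gamma))\to((\alpha\to\beta)\to(\alpha\to\gamma))$ (with $\alpha:=\A\phi$, $\beta:=\psi$, $\gamma:=\A\psi$) delivers the desired $\vdash\A\phi\to\A\psi$.

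I do not expect a genuine obstacle here; the only non-mechanical step is spotting the correct instantiation of the Strict Conditional axiom with $\phi$ and $\psi$ swapped, so that its conclusion produces $\psi\to\A\psi$, and then noticing that the extra hypothesis $\psi$ is eliminated purely syntactically by the Truth axiom rather than by any semantic argument. Everything else is routine application of Necessitation, Modus Ponens, and propositional bookkeeping.
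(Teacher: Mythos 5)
Your proof is correct and follows essentially the same route as the paper's: the same instance of the Strict Conditional axiom (with $\phi$ and $\psi$ swapped) discharged by Necessitation and Modus Ponens, followed by the Truth axiom to obtain $\A\phi\to\psi$ and a final propositional combination. The only cosmetic difference is that you name the combining tautology explicitly where the paper just cites ``propositional reasoning.''
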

\begin{proof}
By the Strict Conditional axiom,
$$
\vdash \N(\psi\to\phi)\to(\A\phi\to(\psi\to \A\psi)).
$$
Assumption  $\vdash \phi\leftrightarrow \psi$ implies $\vdash \psi\to \phi$ by the laws of propositional reasoning. Thus, $\vdash \N(\psi\to \phi)$ by the Necessitation inference rule. Hence, by the Modus Ponens rule,
$$
\vdash \A\phi\to(\psi\to \A\psi).
$$
Thus, by the laws of propositional reasoning,
\begin{equation}\label{sofia}
\vdash (\A\phi\to\psi)\to (\A\phi\to \A\psi).
\end{equation}
Note that $\vdash \A\phi\to\phi$ by the Truth axiom. At the same time, $\vdash \phi\leftrightarrow \psi$ by the assumption of the lemma. Thus, by the laws of propositional reasoning, $\vdash \A\phi\to\psi$. Therefore,
$
\vdash \A\phi\to \A\psi
$
by the Modus Ponens inference rule from statement~(\ref{sofia}).
\end{proof}

\section{Soundness}

In this section we prove the soundness of our logical system. The soundness of the Truth, the Negative Introspection, and the Distributivity axioms and of the two inference rules is straightforward. Below we prove the soundness of each of the remaining axioms as a separate lemma for any action $d\in\mathcal{D}$ of the defender, any response action $a\in\mathcal{A}_d$ of the attacker of an arbitrary security game $(\mathcal{D},\{\mathcal{A}_d\}_{d\in\mathcal{D}},\pi)$ and any formulae $\phi,\psi\in\Phi$. 

\begin{lemma}
If $(d,a)\Vdash \N\phi$, then $(d,a)\nVdash\A\phi$.
\end{lemma}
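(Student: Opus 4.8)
The plan is to argue by contradiction, unfolding the two modalities directly according to Definition~\ref{sat}. I would assume $(d,a)\Vdash\N\phi$ and, toward a contradiction, suppose that $(d,a)\Vdash\A\phi$ as well. The first step is to extract a witness from the supposition $(d,a)\Vdash\A\phi$: by item 5 of Definition~\ref{sat}, the second conjunct in the definition of $\A\phi$ guarantees some response action $a'\in\mathcal{A}_d$ of the attacker such that $(d,a')\nVdash\phi$.

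The second step is to confront this witness with the assumption $(d,a)\Vdash\N\phi$. By item 4 of Definition~\ref{sat}, the statement $\N\phi$ asserts that $\phi$ holds under \emph{every} profile $(d'',a'')$ with $d''\in\mathcal{D}$ and $a''\in\mathcal{A}_{d''}$. Instantiating this universal statement at $d''=d$ and $a''=a'$ yields $(d,a')\Vdash\phi$, which directly contradicts the conclusion $(d,a')\nVdash\phi$ obtained in the first step. This contradiction refutes the supposition $(d,a)\Vdash\A\phi$, giving $(d,a)\nVdash\A\phi$, as required.

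I do not expect a genuine obstacle here, as the statement is essentially immediate once both modalities are expanded. The only point that warrants a moment of care is checking that the profile $(d,a')$ produced by item 5 genuinely falls within the range of profiles quantified over in item 4: this holds because $a'$ ranges over $\mathcal{A}_d$, and the universal quantifier of item 4 ranges over all $\mathcal{A}_{d''}$, which in particular includes the set $\mathcal{A}_d$ obtained by taking $d''=d$. Note also that the first conjunct of $\A\phi$, namely $(d,a)\Vdash\phi$, is not needed for this argument; only the ``could have prevented'' clause from item 5 is used.
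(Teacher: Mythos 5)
Your proposal is correct and matches the paper's own argument: both simply unfold items 4 and 5 of Definition~\ref{sat} and observe that $\N\phi$ forces $(d,a')\Vdash\phi$ for every $a'\in\mathcal{A}_d$, leaving no witness for the preventability clause of $\A\phi$. The only difference is cosmetic -- you phrase it as a proof by contradiction while the paper argues directly -- so no further comment is needed.
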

\begin{proof}
By item 4 of Definition~\ref{sat}, the assumption $(d,a)\Vdash \N\phi$ implies that $(d',a')\Vdash\phi$ for each action $d'\in\mathcal{D}$ of the defender and each response action $a'\in\mathcal{A}_{d'}$ of the attacker. In particular, $(d,a')\Vdash\phi$ for each response action $a'\in\mathcal{A}_{d}$ of the attacker. Therefore, $(d,a)\nVdash\A\phi$ by item 5 of Definition~\ref{sat}.
\end{proof}

\begin{lemma}\label{conjunction sound}
If $(d,a)\Vdash \A(\phi\wedge\psi)$, then either $(d,a)\Vdash \A\phi$ or $(d,a)\Vdash \A\psi$.
\end{lemma}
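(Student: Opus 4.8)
The plan is to unfold the assumption $(d,a)\Vdash\A(\phi\wedge\psi)$ directly through item 5 of Definition~\ref{sat}. That item yields two facts: first, $(d,a)\Vdash\phi\wedge\psi$, and second, there is a response action $a'\in\mathcal{A}_d$ of the attacker such that $(d,a')\nVdash\phi\wedge\psi$. The crucial point to emphasize is that this witness $a'$ lies in $\mathcal{A}_d$, that is, it responds to the \emph{same} defender action $d$. This is precisely where the asymmetry of information enters: because the defender's action is held fixed, a single attacker response $a'$ is available to serve as a prevention witness for either conjunct.

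First I would derive $(d,a)\Vdash\phi$ and $(d,a)\Vdash\psi$ from $(d,a)\Vdash\phi\wedge\psi$ by the propositional clauses of Definition~\ref{sat} (items 2 and 3, since $\wedge$ is defined through $\to$ and $\neg$). Next, from $(d,a')\nVdash\phi\wedge\psi$ I would conclude, again by the propositional clauses, that $(d,a')\nVdash\phi$ or $(d,a')\nVdash\psi$.

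I would then split into these two cases. In the case $(d,a')\nVdash\phi$, I combine it with $(d,a)\Vdash\phi$ and the fact that $a'\in\mathcal{A}_d$ to apply item 5 of Definition~\ref{sat}, obtaining $(d,a)\Vdash\A\phi$. Symmetrically, in the case $(d,a')\nVdash\psi$, combining with $(d,a)\Vdash\psi$ and $a'\in\mathcal{A}_d$ gives $(d,a)\Vdash\A\psi$ by item 5. In either case the disjunction $(d,a)\Vdash\A\phi$ or $(d,a)\Vdash\A\psi$ holds, which completes the argument.

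There is no real obstacle here beyond correctly bookkeeping which clause of Definition~\ref{sat} is invoked at each step; the substantive content is the single observation that the same $a'\in\mathcal{A}_d$ witnesses preventability for whichever conjunct it falsifies. This is exactly the feature that fails for the defender's modality $\D$, as illustrated by Lemma~\ref{conjunction lemma axioms section}, where preventing $p$ and preventing $q$ would require distinct defender actions.
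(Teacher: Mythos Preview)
Your proposal is correct and follows essentially the same approach as the paper's proof: unfold $\A(\phi\wedge\psi)$ via item~5 of Definition~\ref{sat}, obtain the witness $a'\in\mathcal{A}_d$ with $(d,a')\nVdash\phi\wedge\psi$, split on which conjunct fails at $(d,a')$, and reassemble $\A\phi$ or $\A\psi$ from item~5. The only cosmetic difference is that the paper dispatches the case split with a ``without loss of generality'' while you handle both cases explicitly.
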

\begin{proof}
By item 5 of Definition~\ref{sat}, the assumption $(d,a)\Vdash \A(\phi\wedge\psi)$ implies that $(d,a)\Vdash \phi\wedge\psi$ and there is a response action $a'\in\mathcal{A}_d$ of the attacker such that $(d,a')\nVdash \phi\wedge\psi$. Hence, either $(d,a')\nVdash \phi$ or $(d,a')\nVdash \psi$. Without loss of generality, suppose that $(d,a')\nVdash \phi$. At the same time, statement $(d,a)\Vdash \phi\wedge\psi$ implies that $(d,a)\Vdash \phi$. Hence, $(d,a)\Vdash \phi$ and $(d,a')\nVdash \phi$. Therefore, $(d,a)\Vdash \A\phi$ by item 5 of Definition~\ref{sat}.
\end{proof}

\begin{lemma}
$(d,a)\nVdash\A(\phi\to\A\phi)$.
\end{lemma}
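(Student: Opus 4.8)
The plan is to argue by contradiction, assuming $(d,a)\Vdash\A(\phi\to\A\phi)$ and deriving an impossibility from the two conjuncts that item 5 of Definition~\ref{sat} attaches to this assumption. Specifically, $(d,a)\Vdash\A(\phi\to\A\phi)$ would give, first, that $(d,a)\Vdash\phi\to\A\phi$, and second, that there is a response action $a'\in\mathcal{A}_d$ with $(d,a')\nVdash\phi\to\A\phi$. I would unpack the second conjunct first, since it is the one that pins down the behavior of $\phi$ across the whole response set $\mathcal{A}_d$.

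From $(d,a')\nVdash\phi\to\A\phi$, item 3 of Definition~\ref{sat} yields $(d,a')\Vdash\phi$ together with $(d,a')\nVdash\A\phi$. This is the crux of the argument: having $\phi$ true but $\A\phi$ false at the same profile is exactly the situation item 5 forbids \emph{unless} $\phi$ cannot be prevented. Concretely, $(d,a')\nVdash\A\phi$ means (by item 5) that either $(d,a')\nVdash\phi$ or there is no response action under which $\phi$ fails; since we already have $(d,a')\Vdash\phi$, the first disjunct is impossible, so I conclude that $(d,a'')\Vdash\phi$ for \emph{every} response action $a''\in\mathcal{A}_d$. In other words, $\phi$ is unavoidable within the responses to the defender's action $d$. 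I expect this step — recognizing that the failure of $\A\phi$ against a true $\phi$ forces $\phi$ to hold throughout $\mathcal{A}_d$ — to be the main (and really the only) point requiring care.

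Once unavoidability of $\phi$ across $\mathcal{A}_d$ is established, the contradiction is immediate. In particular $(d,a)\Vdash\phi$, so the first conjunct $(d,a)\Vdash\phi\to\A\phi$ gives $(d,a)\Vdash\A\phi$ by item 3. But item 5 then demands a response action $a'''\in\mathcal{A}_d$ with $(d,a''')\nVdash\phi$, directly contradicting the fact that $\phi$ holds at every response to $d$. Hence the assumption $(d,a)\Vdash\A(\phi\to\A\phi)$ is untenable, and $(d,a)\nVdash\A(\phi\to\A\phi)$ as required. The whole proof is a short chain of applications of items 3 and 5 of Definition~\ref{sat}, with no appeal to the necessity modality $\N$ and no case analysis beyond the single dichotomy in the unpacking of $\neg\A\phi$.
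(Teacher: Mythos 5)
Your argument is correct and coincides with the paper's own proof: both unpack $(d,a)\Vdash\A(\phi\to\A\phi)$ via item 5 of Definition~\ref{sat}, use the witness $a'$ with $(d,a')\Vdash\phi$ and $(d,a')\nVdash\A\phi$ to conclude that $\phi$ holds at every response in $\mathcal{A}_d$, and then derive $(d,a)\Vdash\A\phi$ from the first conjunct to reach the same contradiction. No differences worth noting.
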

\begin{proof}
Suppose that $(d,a)\Vdash\A(\phi\to\A\phi)$. Thus, by item 5 of Definition~\ref{sat}, 
\begin{equation}\label{no blame sound eq}
    (d,a)\Vdash\phi\to\A\phi
\end{equation}
and there is a response action $a'\in\mathcal{A}_d$ of the attacker such that $(d,a')\nVdash\phi\to\A\phi$. Hence, $(d,a')\Vdash\phi$ and $(d,a')\nVdash\A\phi$ by item 3 of Definition~\ref{sat}. Thus, 
\begin{equation}\label{no blame sound eq 2}
    (d,a'')\Vdash\phi
\end{equation}
for any response action $a''\in\mathcal{A}_d$ of the attacker, by item 5 of Definition~\ref{sat}. In particular, $(d,a)\Vdash\phi$. Then, $(d,a)\Vdash\A\phi$ due to statement~(\ref{no blame sound eq}) and item 3 of Definition~\ref{sat}. Thus, by item 5 of Definition~\ref{sat}, there must exist a response action $b\in\mathcal{A}_d$ of the attacker such that $(d,b)\nVdash\phi$, which contradicts to statement~(\ref{no blame sound eq 2}).
\end{proof}

\begin{lemma}\label{strict conditional soundness}
If $(d,a)\Vdash \N(\phi\to\psi)$, $(d,a)\Vdash \A\psi$, and $(d,a)\Vdash \phi$, then $(d,a)\Vdash \A\phi$.
\end{lemma}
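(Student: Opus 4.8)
The plan is to unpack the three hypotheses via Definition~\ref{sat} and then show that a single alternative action witnesses the blameworthiness of both $\psi$ and $\phi$ simultaneously. First I would observe that the required first conjunct of $\A\phi$, namely $(d,a)\Vdash\phi$, is handed to us directly as the third hypothesis, so nothing needs to be done there. The only real content is producing a response action $a'\in\mathcal{A}_d$ of the attacker under which $\phi$ fails.

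To find such an $a'$, I would extract it from the assumption $(d,a)\Vdash\A\psi$. By item 5 of Definition~\ref{sat}, this assumption gives both $(d,a)\Vdash\psi$ and the existence of some $a'\in\mathcal{A}_d$ with $(d,a')\nVdash\psi$. I claim this same $a'$ already prevents $\phi$. The reason is that the assumption $(d,a)\Vdash\N(\phi\to\psi)$, by item 4 of Definition~\ref{sat}, forces $(d',a')\Vdash\phi\to\psi$ at \emph{every} profile, in particular at $(d,a')$. Now I would argue by the contrapositive reading of item 3 of Definition~\ref{sat}: if we had $(d,a')\Vdash\phi$, then $(d,a')\Vdash\phi\to\psi$ would yield $(d,a')\Vdash\psi$, contradicting $(d,a')\nVdash\psi$. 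Hence $(d,a')\nVdash\phi$.

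Having secured $(d,a)\Vdash\phi$ together with a response action $a'\in\mathcal{A}_d$ satisfying $(d,a')\nVdash\phi$, I would conclude $(d,a)\Vdash\A\phi$ by a final appeal to item 5 of Definition~\ref{sat}. The one step to carry out carefully is the contrapositive deduction that the witness for $\neg\psi$ is automatically a witness for $\neg\phi$; this is the formal content of the informal remark in the Axioms section that ``any action of the attacker that prevents $\psi$ also prevents $\phi$.'' I do not anticipate any genuine obstacle, since the globality of $\N$ guarantees $\phi\to\psi$ holds at the very action $a'$ we obtain, so no separate construction of an alternative action is needed.
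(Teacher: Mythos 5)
Your proposal is correct and follows essentially the same route as the paper's proof: extract the witness $a'$ with $(d,a')\nVdash\psi$ from $(d,a)\Vdash\A\psi$, use the globality of $\N(\phi\to\psi)$ to get $(d,a')\Vdash\phi\to\psi$, conclude $(d,a')\nVdash\phi$ by item~3 of Definition~\ref{sat}, and combine with the hypothesis $(d,a)\Vdash\phi$ via item~5. No gaps.
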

\begin{proof}
By item 5 of Definition~\ref{sat}, the assumption $(d,a)\Vdash \A\psi$ implies that there is a response action $a'\in\mathcal{A}_d$ of the attacker such that $(d,a')\nVdash \psi$. At the same time, $(d,a')\Vdash \phi\to\psi$ by item 4 of Definition~\ref{sat} and the assumption $(d,a)\Vdash \N(\phi\to\psi)$. Hence, $(d,a')\nVdash \phi$ by item 3 of Definition~\ref{sat}. Therefore, $(d,a)\Vdash \A\phi$ by the assumption $(d,a)\Vdash \phi$ and item 5 of Definition~\ref{sat}.
\end{proof}

\section{Completeness}

In this section we prove the completeness of our logical system in three steps. First, we introduce an auxiliary modality $\R$ as an abbreviation definable through modality $\A$. Next, we define a canonical security game and prove its basic property. Finally, we state and prove the strong completeness theorem for our logical system.

\subsection{Preliminaries}

Let $\R\phi$ be an abbreviation for $\neg(\phi\to\A\phi)$. Note that $\R\phi$ stands for ``statement $\phi$ is true, but the attacker cannot be blamed for it''. In other words, $\R\phi$ means that {\em the defender's} action {\em unavoidably} led to $\phi$ being true. This modality is not present in~\cite{nt19aaai}. In the context of STIT logic, but not in the context of security games, a similar single-agent modality was studied in \cite{x98jpl}. The same modality for coalitions was investigated in \cite{bht09jancl}.  Below we prove the key properties of modality $\R$ that are used later in the proof of the completeness.

\begin{lemma}\label{N to R}
$\vdash\N\phi\to\R\phi$.
\end{lemma}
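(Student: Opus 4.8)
The plan is to unfold the abbreviation and then recognize that the statement is an immediate propositional consequence of two of the axioms. By definition, $\R\phi$ abbreviates $\neg(\phi\to\A\phi)$, and by elementary propositional reasoning this formula is equivalent to $\phi\wedge\neg\A\phi$. Hence proving $\vdash\N\phi\to\R\phi$ amounts to proving $\vdash\N\phi\to(\phi\wedge\neg\A\phi)$.

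First I would invoke the Truth axiom for the modality $\N$, namely the instance $\N\phi\to\phi$ of $\Box\phi\to\phi$ with $\Box=\N$. This gives the first conjunct. Next I would invoke the Unavoidability axiom $\N\phi\to\neg\A\phi$ directly; this gives the second conjunct. Having both $\vdash\N\phi\to\phi$ and $\vdash\N\phi\to\neg\A\phi$, the conjunction $\vdash\N\phi\to(\phi\wedge\neg\A\phi)$ follows by the laws of propositional reasoning.

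Finally I would close the argument by observing, again purely propositionally, that $\phi\wedge\neg\A\phi$ is equivalent to $\neg(\phi\to\A\phi)$, which is exactly $\R\phi$ by the definition of the abbreviation. Combining this equivalence with the previous step yields $\vdash\N\phi\to\R\phi$ via Modus Ponens.

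I do not anticipate a genuine obstacle here: the lemma is essentially a bookkeeping fact that packages the Truth and Unavoidability axioms into the $\R$ notation, and the only thing to be careful about is correctly rewriting $\neg(\phi\to\A\phi)$ as $\phi\wedge\neg\A\phi$ when matching the two axiom conclusions. The semantic sanity check is immediate: if $(d,a)\Vdash\N\phi$ then $\phi$ holds under every profile, so in particular $(d,a)\Vdash\phi$ and there is no alternative response of the attacker falsifying $\phi$, whence $(d,a)\nVdash\A\phi$, i.e. $(d,a)\Vdash\R\phi$.
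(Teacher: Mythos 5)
Your proof is correct and matches the paper's own argument essentially verbatim: both derive $\N\phi\to\phi$ from the Truth axiom and $\N\phi\to\neg\A\phi$ from the Unavoidability axiom, combine them propositionally into $\N\phi\to(\phi\wedge\neg\A\phi)$, and rewrite this as $\N\phi\to\neg(\phi\to\A\phi)$, i.e.\ $\N\phi\to\R\phi$. No gaps.
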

\begin{proof}
By the Unavoidability axiom, $\vdash \N\phi\to\neg\A\phi$. At the same time, $\vdash \N\phi\to\phi$ by the Truth axiom. Hence, by propositional reasoning, $\vdash \N\phi\to\phi\wedge\neg\A\phi$. Thus, again by propositional reasoning, $\vdash \N\phi\to\neg(\phi\to\A\phi)$. Therefore, $\vdash\N\phi\to\R\phi$ by the definition of modality $\R$.
\end{proof}

The next four lemmas show that $\R$ is an S5 modality.

\begin{lemma}\label{necessitation rule for R}
Inference rule $\dfrac{\phi}{\R\phi}$ is derivable.
\end{lemma}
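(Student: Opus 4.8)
The plan is to reduce this to the immediately preceding lemma. I would show that whenever $\vdash\phi$ is derivable, so is $\vdash\R\phi$, by routing through the necessity modality $\N$.

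First I would apply the Necessitation inference rule to the assumption $\vdash\phi$, which yields $\vdash\N\phi$. Next I would invoke Lemma~\ref{N to R}, which states $\vdash\N\phi\to\R\phi$. Finally, combining these two by the Modus Ponens inference rule gives $\vdash\R\phi$, as required. In other words, the derivability of the rule $\dfrac{\phi}{\R\phi}$ follows from the derivability of the rule $\dfrac{\phi}{\N\phi}$ (Necessitation) composed with the provable implication $\N\phi\to\R\phi$.

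There is essentially no obstacle here: the work of relating $\R$ to the rest of the system has already been done in Lemma~\ref{N to R} (using the Unavoidability and Truth axioms), so the present statement is just the rule-level counterpart of that implication. The only point to keep in mind is that $\R\phi$ is a defined abbreviation for $\neg(\phi\to\A\phi)$, so the final line is literally a derivation of $\vdash\neg(\phi\to\A\phi)$ from $\vdash\phi$; no expansion of the abbreviation is actually needed, since Lemma~\ref{N to R} is already stated in terms of $\R$.
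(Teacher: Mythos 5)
Your proposal is correct and follows exactly the paper's own proof: apply Necessitation to obtain $\vdash\N\phi$, then combine with Lemma~\ref{N to R} via Modus Ponens to conclude $\vdash\R\phi$. Nothing is missing.
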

\begin{proof}
Suppose that $\vdash\phi$. Thus, $\vdash\N\phi$ by the Necessitation inference rule. Therefore, $\vdash\R\phi$ by Lemma~\ref{N to R} and the Modus Ponens inference rule.
\end{proof}

\begin{lemma}\label{truth axiom for R}
$\vdash\R\phi\to\phi$.
\end{lemma}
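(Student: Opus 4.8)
The plan is to prove the statement purely by propositional reasoning after unfolding the abbreviation for $\R$. Recall that $\R\phi$ is defined to be $\neg(\phi\to\A\phi)$. Hence the claim $\vdash\R\phi\to\phi$ becomes, once the definition is substituted, the statement $\vdash\neg(\phi\to\A\phi)\to\phi$.

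First I would observe that $\neg(\phi\to\psi)\to\phi$ is a propositional tautology for arbitrary formulas $\phi$ and $\psi$: in classical logic $\neg(\phi\to\psi)$ is equivalent to $\phi\wedge\neg\psi$, which entails $\phi$. Instantiating $\psi$ as $\A\phi$, the formula $\neg(\phi\to\A\phi)\to\phi$ is an instance of this tautology and is therefore provable in our system.

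Therefore $\vdash\R\phi\to\phi$ follows at once by the definition of $\R$. Unlike the companion S5 property established in Lemma~\ref{N to R}, which appeals to the Unavoidability and Truth axioms for $\A$, the Truth axiom for $\R$ needs no modal axioms whatsoever; it is a consequence of propositional logic alone. The single step is recognizing that the abbreviation unfolds into a tautology, so there is essentially no obstacle to overcome here --- all the content lies in the form of the definition of $\R$.
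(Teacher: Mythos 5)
Your proof is correct and matches the paper's own argument exactly: both unfold $\R\phi$ to $\neg(\phi\to\A\phi)$ and observe that $\neg(\phi\to\A\phi)\to\phi$ is a propositional tautology. Nothing further is needed.
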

\begin{proof}
Note that formula $\neg(\phi\to\A\phi)\to\phi$ is a propositional tautology. Thus, $\vdash \R\phi\to\phi$ by the definition of the modality $\R$.
\end{proof}

\begin{lemma}\label{K axiom for R}
$\vdash\R(\phi\to\psi)\to(\R\phi\to\R\psi)$.
\end{lemma}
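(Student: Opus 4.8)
The plan is to unfold the abbreviation $\R\chi=\neg(\chi\to\A\chi)$, which is propositionally equivalent to $\chi\wedge\neg\A\chi$. Thus proving the claim amounts to deriving, under the four hypotheses $\phi\to\psi$, $\neg\A(\phi\to\psi)$, $\phi$, and $\neg\A\phi$, that both $\psi$ and $\neg\A\psi$ hold. The first conjunct, $\psi$, is immediate by propositional reasoning from $\phi$ and $\phi\to\psi$, so the entire content of the lemma lies in establishing $\neg\A\psi$.

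For $\neg\A\psi$ I would argue by contradiction, assuming $\A\psi$ and showing it forces blame that the antecedents forbid. The idea is to push the blame for $\psi$ down onto the conjunction $\phi\wedge(\phi\to\psi)$ and then split it. Concretely, since $(\phi\wedge(\phi\to\psi))\to\psi$ is a propositional tautology, the Necessitation rule gives $\vdash\N((\phi\wedge(\phi\to\psi))\to\psi)$. Feeding this into the Strict Conditional axiom instantiated with $\phi\wedge(\phi\to\psi)$ in the role of the premise variable and $\psi$ in the role of the conclusion variable yields, after Modus Ponens, $\vdash\A\psi\to((\phi\wedge(\phi\to\psi))\to\A(\phi\wedge(\phi\to\psi)))$.

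Now the Conjunction axiom, instantiated at $\phi$ and $\phi\to\psi$, gives $\vdash\A(\phi\wedge(\phi\to\psi))\to(\A\phi\vee\A(\phi\to\psi))$. Chaining this with the previous implication produces $\vdash\A\psi\to((\phi\wedge(\phi\to\psi))\to(\A\phi\vee\A(\phi\to\psi)))$. Under the hypotheses $\phi$ and $\phi\to\psi$ the conjunction $\phi\wedge(\phi\to\psi)$ holds, so $\A\psi$ would force $\A\phi\vee\A(\phi\to\psi)$, directly contradicting the hypotheses $\neg\A\phi$ and $\neg\A(\phi\to\psi)$ supplied by $\R\phi$ and $\R(\phi\to\psi)$. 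Hence $\neg\A\psi$, and the lemma follows by collecting the two conjuncts and folding the $\R$ abbreviations back in.

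The propositional bookkeeping around unfolding $\R$ is mechanical; the one genuinely creative step, and the main obstacle, is spotting the instantiation that makes the two available $\A$-axioms compose. Applying the Strict Conditional at the conjunction $\phi\wedge(\phi\to\psi)$, rather than at $\phi$ or $\psi$ directly, is exactly what converts blame for $\psi$ into blame for that conjunction, and only then does the Conjunction axiom split it into the two pieces excluded by the antecedents.
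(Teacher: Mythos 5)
Your proposal is correct and follows essentially the same route as the paper's proof: both hinge on applying the Strict Conditional axiom with the conjunction $\phi\wedge(\phi\to\psi)$ as the antecedent to convert blame for $\psi$ into blame for that conjunction, and then using the Conjunction axiom to split it into $\A\phi\vee\A(\phi\to\psi)$, which the unfolded hypotheses $\R\phi$ and $\R(\phi\to\psi)$ rule out. The only differences are presentational (you argue by contradiction under hypotheses, the paper chains explicit implications using the contrapositive of the Conjunction axiom).
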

\begin{proof}
Note that the following formula is a propositional tautology
\begin{eqnarray*}
&&\neg((\phi\to\psi)\to\A(\phi\to\psi))\to\\
&&\hspace{10mm}(\neg(\phi\to\A\phi) \to (\neg\A(\phi\to\psi) \wedge \neg\A\phi)).
\end{eqnarray*}
Thus, it follows from the definition of the modality $\R$ that 
\begin{eqnarray*}
&&\vdash \R(\phi\to\psi)\to(\R\phi \to (\neg\A(\phi\to\psi) \wedge \neg\A\phi)).
\end{eqnarray*}
At the same time, formula 
\begin{eqnarray*}
&&(\neg\A(\phi\to\psi) \wedge \neg\A\phi)\to \neg\A((\phi\to\psi)\wedge\phi) 
\end{eqnarray*}
is a contrapositive of the Conjunction axiom. Thus, by the laws of propositional reasoning,
\begin{eqnarray}\label{alpha eq}
&&\vdash \R(\phi\to\psi)\to(\R\phi \to \neg\A((\phi\to\psi)\wedge\phi).
\end{eqnarray}
Next, note that the following formula is also a propositional tautology
$
((\phi\to\psi)\wedge\phi)\to\psi
$.
Hence, by the Necessitation inference rule,
$
\vdash\N(((\phi\to\psi)\wedge\phi)\to\psi)
$.
Thus, by the Strict Conditional axiom and the Modus Ponens inference rule,
$$
\vdash\A\psi\to  ((\phi\to\psi)\wedge\phi\to\A((\phi\to\psi)\wedge \phi)).
$$
Then, by the laws of propositional reasoning,
$$
\vdash\neg\A((\phi\to\psi)\wedge \phi)\to ((\phi\to\psi)\wedge \phi\to\neg\A\psi).
$$
Hence, by propositional reasoning using statement~(\ref{alpha eq}),
\begin{equation}\label{beta equation}
\vdash \R(\phi\to\psi)\to(\R\phi \to ((\phi\to\psi)\wedge \phi\to\neg\A\psi)).    
\end{equation}
Note that the following formula is a propositional tautology
\begin{eqnarray*}
&&\neg((\phi\to\psi)\to\A(\phi\to\psi))\to\\
&&\hspace{10mm}(\neg(\phi\to\A\phi) \to ((\phi\to\psi)\wedge \phi)).
\end{eqnarray*}
Thus, it follows from the definition of the modality $\R$ that
\begin{equation}\label{gamma equation}
\vdash\R(\phi\to\psi)\to(\R\phi \to ((\phi\to\psi)\wedge \phi)).
\end{equation}
Then, by  propositional reasoning using statement~(\ref{beta equation}),
\begin{equation}\label{delta equation}
\vdash \R(\phi\to\psi)\to(\R\phi \to \neg\A\psi).    
\end{equation}
Additionally, note that $((\phi\to\psi)\wedge \phi)\to\psi$ is a propositional tautology. Hence, statement~(\ref{gamma equation}) also implies
$$
\vdash\R(\phi\to\psi)\to(\R\phi \to \psi).
$$
Thus, by propositional reasoning using statement~(\ref{delta equation}),
$$
\vdash \R(\phi\to\psi)\to(\R\phi \to (\psi\wedge \neg\A\psi)).
$$
Again by propositional reasoning,
$$
\vdash \R(\phi\to\psi)\to(\R\phi \to \neg(\psi\to\A\psi)).
$$
Therefore, $\vdash\R(\phi\to\psi)\to(\R\phi\to\R\psi)$ by the definition of the modality $\R$.
\end{proof}

\begin{lemma}\label{negative introspection for R}
$\vdash\neg\R\phi\to\R\neg\R\phi$.
\end{lemma}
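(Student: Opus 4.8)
The plan is to unfold the outer occurrence of $\R$ using its definition and thereby reduce the whole statement to the No Blame axiom. Recall that $\R\psi$ abbreviates $\neg(\psi\to\A\psi)$, which is propositionally equivalent to $\psi\wedge\neg\A\psi$. Applying this with $\psi:=\neg\R\phi$, the formula $\R\neg\R\phi$ is propositionally equivalent to $\neg\R\phi\wedge\neg\A\neg\R\phi$. Hence, by propositional reasoning alone, the target $\neg\R\phi\to\R\neg\R\phi$ is equivalent to $\neg\R\phi\to\neg\A\neg\R\phi$. So it suffices to prove the stronger fact $\vdash\neg\A\neg\R\phi$, since then the implication holds vacuously from its consequent.

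To establish $\vdash\neg\A\neg\R\phi$, I would next unfold the inner $\R$: by definition, $\neg\R\phi$ is $\neg\neg(\phi\to\A\phi)$, which is tautologically equivalent to $\phi\to\A\phi$. The No Blame axiom delivers $\vdash\neg\A(\phi\to\A\phi)$ directly. The only obstacle is that No Blame talks about $\A(\phi\to\A\phi)$, whereas I need a fact about $\A(\neg\neg(\phi\to\A\phi))=\A\neg\R\phi$; the two differ by a double negation \emph{inside} the scope of $\A$, and $\A$ is not a priori closed under propositional equivalence. This double-negation bookkeeping is the one genuinely non-routine point of the argument.

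This is exactly what Lemma~\ref{biconditional lemma} is for. Applying it to the propositional tautology $\neg\neg(\phi\to\A\phi)\leftrightarrow(\phi\to\A\phi)$ gives $\vdash\A(\neg\neg(\phi\to\A\phi))\to\A(\phi\to\A\phi)$. Taking the contrapositive and combining with the No Blame axiom $\vdash\neg\A(\phi\to\A\phi)$ yields $\vdash\neg\A(\neg\neg(\phi\to\A\phi))$, that is, $\vdash\neg\A\neg\R\phi$. Feeding this back into the propositional reduction from the first paragraph then closes the proof. I expect the whole derivation to be short; the main subtlety to watch is keeping the nested negations introduced by the abbreviation $\R$ aligned with the syntactic shape of the No Blame axiom, which is precisely the role of the appeal to Lemma~\ref{biconditional lemma}.
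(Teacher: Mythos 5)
Your proposal is correct and follows essentially the same route as the paper: both arguments hinge on applying Lemma~\ref{biconditional lemma} to the tautology $\neg\neg(\phi\to\A\phi)\leftrightarrow(\phi\to\A\phi)$, contraposing, and combining with the No Blame axiom to get $\vdash\neg\A\neg\R\phi$, after which the claim follows by propositional reasoning on the unfolded abbreviation. Your reduction of the goal to the single fact $\vdash\neg\A\neg\R\phi$ is just a slightly more streamlined packaging of the paper's final propositional steps.
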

\begin{proof}
Note that $\neg\neg(\phi\to\A\phi)\leftrightarrow(\phi\to\A\phi)$ is a propositional tautology. Thus, $\vdash\A\neg\neg(\phi\to\A\phi)\to\A(\phi\to\A\phi)$ by Lemma~\ref{biconditional lemma}. Hence, $\vdash\neg\A(\phi\to\A\phi)\to \neg\A\neg\neg(\phi\to\A\phi)$ by contraposition. Then, $\vdash\neg\A\neg\neg(\phi\to\A\phi)$ by the No Blame Axiom and the Modus Ponens inference rule.
Thus, by the laws of propositional reasoning,
$$
\vdash (\phi\to\A\phi) \to \neg((\phi\to\A\phi) \to \A\neg\neg(\phi\to\A\phi)).
$$
Hence, again by the laws of propositional reasoning,
$$
\vdash \neg\neg(\phi\to\A\phi) \to \neg(\neg\neg(\phi\to\A\phi) \to \A\neg\neg(\phi\to\A\phi)).
$$
Recall that $\R\phi$ is an abbreviation for $\neg(\phi\to\A\phi)$. Then,
$$
\vdash \neg\R\phi \to \neg(\neg\R\phi \to \A\neg\R\phi).
$$
Thus, $\vdash\neg\R\phi\to\R\neg\R\phi$ again by the definition of $\R$.
\end{proof}

The next two lemmas capture well known properties of S5 modalities. 

\begin{lemma}\label{superdistributivity}
If $\phi_1,\dots,\phi_n\vdash \psi$, then $\Box\phi_1,\dots,\Box\phi_n\vdash \Box\psi$, where $\Box$ is either modality $\N$ or modality $\R$.
\end{lemma}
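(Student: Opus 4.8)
The plan is to reduce the claim to exactly two facts, each of which is available for both $\N$ and $\R$: first, the \emph{necessitation} rule $\dfrac{\chi}{\Box\chi}$, which is a primitive rule of the system for $\N$ and is Lemma~\ref{necessitation rule for R} for $\R$; and second, the \emph{distributivity} law $\vdash\Box(\chi\to\theta)\to(\Box\chi\to\Box\theta)$, which is the Distributivity axiom for $\N$ and is Lemma~\ref{K axiom for R} for $\R$. The other ingredient I would use is the classical deduction theorem for the relation $X\vdash$; this is legitimate precisely because, by definition, derivations from an extra assumption set $X$ use \emph{only} Modus Ponens (and not Necessitation) on the formulae of $X$, so discharging and adding hypotheses behaves exactly as in propositional Hilbert calculi.

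I would argue by induction on $n$. For the base case $n=0$ the hypothesis is the theorem $\vdash\psi$, whence $\vdash\Box\psi$ by the necessitation rule for $\Box$, and a theorem is derivable from any assumption set. For the inductive step, assume the statement for $n-1$ and suppose $\phi_1,\dots,\phi_n\vdash\psi$. First I would apply the deduction theorem to peel off the last assumption, obtaining $\phi_1,\dots,\phi_{n-1}\vdash\phi_n\to\psi$. The induction hypothesis then yields $\Box\phi_1,\dots,\Box\phi_{n-1}\vdash\Box(\phi_n\to\psi)$. Combining this with the distributivity instance $\vdash\Box(\phi_n\to\psi)\to(\Box\phi_n\to\Box\psi)$ via Modus Ponens gives $\Box\phi_1,\dots,\Box\phi_{n-1}\vdash\Box\phi_n\to\Box\psi$, and adjoining $\Box\phi_n$ to the assumptions and applying Modus Ponens once more produces $\Box\phi_1,\dots,\Box\phi_n\vdash\Box\psi$, as required.

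The only delicate point, and the one I would flag explicitly, is the appeal to the deduction theorem: it is sound here solely because $X\vdash$ permits Necessitation only on genuine theorems, never on the hypotheses. The induction is organized so that necessitation is applied to a formula of the form $\phi_1\to\cdots\to\phi_n\to\psi$ that has first been turned into a theorem by discharging all the $\phi_i$; equivalently, in the recursive formulation above, each use of the necessitation rule lands on an object that is a bona fide theorem rather than a mere consequence of $X$. Beyond keeping this bookkeeping straight, I expect no real obstacle, since the argument is the standard normal-modal-logic derivation of multi-premise monotonicity, specialized to the two modalities $\N$ and $\R$ whose $\mathrm{S5}$ (in particular $\mathrm{K}$) behavior has already been established in the preceding lemmas.
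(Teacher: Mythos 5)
Your proof is correct and is essentially the same argument as the paper's: both rest on the deduction theorem for $X\vdash$ (legitimate because only Modus Ponens is allowed on extra hypotheses), the necessitation rule for $\Box$ (primitive for $\N$, Lemma~\ref{necessitation rule for R} for $\R$), and the distributivity law (the Distributivity axiom for $\N$, Lemma~\ref{K axiom for R} for $\R$). The only difference is presentational: the paper discharges all $n$ hypotheses at once, necessitates the nested implication, and then reintroduces the boxed premises one by one, whereas you package the same steps as an induction on $n$.
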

\begin{proof}
First, consider the case when $\Box$ is modality $\N$. Assumption $\phi_1,\dots,\phi_n\vdash \psi$ by the deduction lemma implies that
$\vdash\phi_1\to(\phi_2\to\dots(\phi_n\to \psi)\dots)$. Hence, by the Necessitation rule, 
$\vdash\N(\phi_1\to(\phi_2\to\dots(\phi_n\to \psi)\dots))$.
Thus, by the Distributivity axiom and the Modus Ponens inference rule,
$\vdash\N\phi_1\to\N(\phi_2\to\dots(\phi_n\to \psi)\dots)$.
Hence, 
$\N\phi_1\vdash\N(\phi_2\to\dots(\phi_n\to \psi)\dots)$
again by the Modus Ponens inference rule.
By repeating the previous two steps $(n-1)$ more times,
$\N\phi_1,\dots,\N\phi_n\vdash \N\psi$.

The case when $\Box$ is modality $\R$ is similar, but it uses Lemma~\ref{necessitation rule for R} instead of the Necessitation inference rule and Lemma~\ref{K axiom for R} instead of the Distributivity axiom.
\end{proof}

\begin{lemma}\label{positive introspection lemma}
$\vdash\Box\phi\to\Box\Box\phi$ where $\Box$ is either modality $\N$ or modality $\R$. 
\end{lemma}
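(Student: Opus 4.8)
The plan is to exploit the fact that, by the lemmas already proved, both $\N$ and $\R$ are full S5 modalities, so that positive introspection is the one remaining S5 principle and is derivable from the others in the standard Hilbert-style fashion. Concretely, for $\Box\in\{\N,\R\}$ we already have the Truth principle $\Box\phi\to\phi$ (the Truth axiom, and Lemma~\ref{truth axiom for R}), distribution $\Box(\alpha\to\beta)\to(\Box\alpha\to\Box\beta)$ (the Distributivity axiom, and Lemma~\ref{K axiom for R}), negative introspection $\neg\Box\phi\to\Box\neg\Box\phi$ (the Negative Introspection axiom, and Lemma~\ref{negative introspection for R}), the corresponding necessitation rule (Lemma~\ref{necessitation rule for R} for $\R$), and the packaged closure rule of Lemma~\ref{superdistributivity}. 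I would therefore give a single derivation phrased uniformly in $\Box$ that specializes to each modality.

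First I would obtain $\Box\phi\to\neg\Box\neg\Box\phi$ as the contrapositive of the Truth principle instantiated at the formula $\neg\Box\phi$, namely $\Box\neg\Box\phi\to\neg\Box\phi$. Next I would instantiate negative introspection at the same formula $\neg\Box\phi$, which yields $\neg\Box\neg\Box\phi\to\Box\neg\Box\neg\Box\phi$. Chaining these two by propositional reasoning gives the first half of the argument, $\Box\phi\to\Box\neg\Box\neg\Box\phi$.

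For the second half I would take the plain instance of negative introspection, $\neg\Box\phi\to\Box\neg\Box\phi$, and read off its contrapositive $\neg\Box\neg\Box\phi\to\Box\phi$. Applying Lemma~\ref{superdistributivity} to this as a single-premise step (from $\neg\Box\neg\Box\phi\vdash\Box\phi$ one gets $\Box\neg\Box\neg\Box\phi\vdash\Box\Box\phi$) together with the deduction lemma produces $\Box\neg\Box\neg\Box\phi\to\Box\Box\phi$. Composing this with $\Box\phi\to\Box\neg\Box\neg\Box\phi$ from the first half yields $\Box\phi\to\Box\Box\phi$, as required, and the derivation is identical for $\Box=\N$ and $\Box=\R$.

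The only real obstacle is bookkeeping: the derivation passes through the triply nested formula $\Box\neg\Box\neg\Box\phi$, and one must instantiate negative introspection at two different subformulas ($\neg\Box\phi$ once, and $\phi$ once) and apply the contrapositives in the correct direction. There is no genuine difficulty beyond keeping the polarity of the nested boxes straight; everything else is propositional reasoning combined with the already-established S5 toolkit.
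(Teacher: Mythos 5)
Your proposal is correct and follows essentially the same route as the paper: both halves pass through the triply nested formula $\Box\neg\Box\neg\Box\phi$, using the contrapositive of Truth at $\neg\Box\phi$ together with Negative Introspection for the first half, and the contrapositive of plain Negative Introspection boxed up for the second. The only cosmetic difference is that you package the final boxing step via Lemma~\ref{superdistributivity} and the deduction lemma, where the paper invokes Necessitation and Distributivity (respectively Lemmas~\ref{necessitation rule for R} and~\ref{K axiom for R}) directly.
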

\begin{proof}
We first consider the case when $\Box$ is modality $\N$. 
Formula $\N\neg\N\phi\to\neg\N\phi$ is an instance of the Truth axiom. Thus, $\vdash \N\phi\to\neg\N\neg\N\phi$ by contraposition. Hence, taking into account the following instance of  the Negative Introspection axiom: $\neg\N\neg\N\phi\to\N\neg\N\neg\N\phi$,
we have 
\begin{equation}\label{pos intro eq 2}
\vdash \N\phi\to\N\neg\N\neg\N\phi.
\end{equation}

At the same time, $\neg\N\phi\to\N\neg\N\phi$ is an instance of the Negative Introspection axiom. Thus, $\vdash \neg\N\neg\N\phi\to \N\phi$ by the law of contrapositive in the propositional logic. Hence, by the Necessitation inference rule, 
$\vdash \N(\neg\N\neg\N\phi\to \N\phi)$. Thus, by  the Distributivity axiom and the Modus Ponens inference rule, 
$
  \vdash \N\neg\N\neg\N\phi\to \N\N\phi.
$
 The latter, together with statement~(\ref{pos intro eq 2}), implies the statement of the lemma by propositional reasoning.
 
The case when $\Box$ is modality $\R$ is similar, but it uses Lemma~\ref{truth axiom for R} instead of the Truth axiom, Lemma~\ref{negative introspection for R} instead of the Negative Introspection axiom, Lemma~\ref{necessitation rule for R} instead of the Necessitation inference rule, and Lemma~\ref{K axiom for R} instead of the Distributivity axiom. 
\end{proof}

\subsection{Canonical Security Game}

We define the canonical game $G(X)=(\Omega,\{\mathcal{A}_\delta\}_{\delta\in\Omega},\pi)$ for each maximal consistent set of formulae $X$.

\begin{definition}\label{canonical Omega}
$\Omega$ is the set of all maximal consistent sets of formulae such that if $\omega\in\Omega$, then $\{\phi\in\Phi\;|\;\N\phi\in X\}\subseteq \omega$.
\end{definition}

\begin{definition}\label{canonical sim}
$\omega\sim\omega'$ if $\forall\phi\in\Phi\;(\R\phi\in\omega\Leftrightarrow\R\phi\in\omega')$.
\end{definition}

Note that $\sim$ is an equivalence relation on set $\Omega$. The set $\mathcal{A}_\delta$ of possible responses by the  attacker on an action $\delta\in\Omega$ of the defender is the (nonempty) equivalence class of element $\delta$ with respect to this equivalence relation:

\begin{definition}\label{canonical A}
$\mathcal{A}_\delta=[\delta]$.
\end{definition}

Thus, each defender's action $\delta\in \Omega$ and each attacker's responses $\omega\in [\delta]$ are maximal consistent sets of formulae. This is significantly different from~\cite{nt19aaai}, where actions of all agents are formulae.

\begin{definition}\label{canonical pi}
$\pi(p)=\{(\delta,\omega)\in \Omega\times\Omega\;|\;\omega\in\mathcal{A}_\delta, p\in \omega\}$.
\end{definition}
This concludes the definition of the canonical game $G(X)$.

As usual, at the core of the proof of completeness is a truth lemma (or an induction lemma), which in our case is Lemma~\ref{truth lemma}. The next four lemmas are auxiliary statements used in the induction step of the proof of Lemma~\ref{truth lemma}.

\begin{lemma}\label{A child exists}
For any action $\delta\in\Omega$ of the defender, any response action $\omega\in [\delta]$ of the attacker, and any formula $\A\phi\in \omega$, we have
(i) $\phi\in\omega$ and
(ii) there is a response action $\omega'\in[\delta]$ such that $\phi\notin\omega'$.  
\end{lemma}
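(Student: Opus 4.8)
The plan is to dispatch (i) directly from the Truth axiom and to reduce (ii) to a standard S5 existence argument for the auxiliary modality $\R$, whose S5 behaviour has been established in the preceding lemmas.

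For (i), since $\A\phi\in\omega$ and $\omega$ is a maximal consistent set (hence deductively closed), the Truth axiom $\A\phi\to\phi$ gives $\phi\in\omega$ at once. For (ii), the first step is to observe that $\A\phi\in\omega$ forces $\R\phi\notin\omega$. Indeed, $\A\phi\to(\phi\to\A\phi)$ is a propositional tautology, so $\phi\to\A\phi\in\omega$; since $\R\phi$ abbreviates $\neg(\phi\to\A\phi)$, consistency of $\omega$ yields $\R\phi\notin\omega$. The goal is then to produce a maximal consistent set $\omega'$ with $\neg\phi\in\omega'$ that lies in the same $\sim$-class as $\omega$ (equivalently as $\delta$, since $\omega\in[\delta]$ means $\omega\sim\delta$ by Definition~\ref{canonical sim}).

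The key step is to show that $Y=\{\neg\phi\}\cup\{\psi\mid\R\psi\in\omega\}$ is consistent. If it were not, then $\{\psi\mid\R\psi\in\omega\}\vdash\phi$, so finitely many $\psi_1,\dots,\psi_k$ with $\R\psi_i\in\omega$ satisfy $\psi_1,\dots,\psi_k\vdash\phi$; applying the superdistributivity of $\R$ (Lemma~\ref{superdistributivity}) gives $\R\psi_1,\dots,\R\psi_k\vdash\R\phi$, whence $\R\phi\in\omega$ by deductive closure, contradicting the previous paragraph. Thus $Y$ extends to a maximal consistent set $\omega'$ containing $\neg\phi$.

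It remains to check $\omega'\in[\delta]$, i.e.\ that $\omega'\in\Omega$ and $\omega'\sim\omega$. For $\omega'\sim\omega$ I verify both inclusions on $\R$-formulas: if $\R\chi\in\omega$ then positive introspection (Lemma~\ref{positive introspection lemma}) gives $\R\R\chi\in\omega$, so $\R\chi\in Y\subseteq\omega'$; conversely, if $\R\chi\notin\omega$ then negative introspection (Lemma~\ref{negative introspection for R}) gives $\R\neg\R\chi\in\omega$, so $\neg\R\chi\in Y\subseteq\omega'$ and hence $\R\chi\notin\omega'$ by consistency. For $\omega'\in\Omega$, take any $\N\psi\in X$; positive introspection for $\N$ (Lemma~\ref{positive introspection lemma}) yields $\N\N\psi\in X$, so $\N\psi\in\omega$ by Definition~\ref{canonical Omega}, Lemma~\ref{N to R} gives $\R\psi\in\omega$, and since $\omega'\sim\omega$ we get $\R\psi\in\omega'$, whence $\psi\in\omega'$ by the Truth axiom for $\R$ (Lemma~\ref{truth axiom for R}). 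Finally, transitivity of $\sim$ together with $\omega\sim\delta$ gives $\omega'\sim\delta$, so $\omega'\in[\delta]=\mathcal{A}_\delta$ by Definition~\ref{canonical A}, and $\neg\phi\in\omega'$ means $\phi\notin\omega'$, completing (ii). I expect the consistency of $Y$ to be the crux: it is where the reduction $\A\phi\Rightarrow\R\phi\notin\omega$ and the S5 superdistributivity of $\R$ do the real work, whereas the two remaining verifications are routine modal bookkeeping with the introspection lemmas.
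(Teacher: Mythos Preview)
Your proof is correct and follows essentially the same route as the paper: build a witness set containing $\neg\phi$ and the $\R$-content of $\omega$, show its consistency via superdistributivity of $\R$ (Lemma~\ref{superdistributivity}) against $\A\phi\in\omega$, extend to a maximal consistent $\omega'$, and verify $\omega'\in[\delta]$ via the introspection lemmas. Your version is slightly leaner in that you omit the component $\{\chi\mid\N\chi\in\omega\}$ from $Y$ (it is redundant, being subsumed by the $\R$-part through Lemma~\ref{N to R}) and you derive $\omega'\in\Omega$ from the already established $\omega'\sim\omega$ rather than from direct membership in $Y$, but these are cosmetic reorderings of the same argument.
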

\begin{proof}
Assumption $\A\phi\in \omega$ implies that $\omega\vdash\phi$ by the Truth axiom and the Modus Ponens inference rule. Thus, $\phi\in\omega$ because set $\omega$ is maximal. This concludes the proof of the first statement.
To prove the second statement, consider the set of formulae
\begin{equation}\label{Y definition}
    Y=\{\neg\phi\}\cup\{\psi\;|\;\R\psi\in\omega\}\cup\{\chi\;|\;\N\chi\in \omega\}.
\end{equation}
\begin{claim}
Set $Y$ is consistent.
\end{claim}
\begin{proof-of-claim}
Suppose the opposite. Thus, there are
\begin{equation}\label{choice of psi and chi}
    \R\psi_1,\dots,\R\psi_k,\N\chi_1,\dots,\N\chi_n\in \omega
\end{equation}
such that
$
\psi_1,\dots,\psi_k,\chi_1,\dots,\chi_n\vdash \phi.
$
Hence, by Lemma~\ref{superdistributivity},
$
\R\psi_1,\dots,\R\psi_k,\R\chi_1,\dots,\R\chi_n\vdash \R\phi
$.
Then, by Lemma~\ref{N to R} and the Modus Ponens inference rule,
$
\R\psi_1,\dots,\R\psi_k,\N\chi_1,\dots,\N\chi_n\vdash \R\phi
$.
Thus, 
$
\omega\vdash \R\phi
$
by statement~(\ref{choice of psi and chi}).
Hence,
$
\omega\vdash \neg(\phi\to\A\phi)
$
by the definition of the modality $\R$. Then,
$
\omega\vdash \neg\A\phi
$
by the laws of the propositional reasoning, which contradicts the assumption $\A\phi\in \omega$ of the lemma because set $\omega$ is consistent. 
\end{proof-of-claim}
Let set $\omega'$ be any maximal consistent extension of set $Y$. Then, $\neg\phi\in \omega'$. Thus, $\phi\notin \omega'$ because set $\omega'$ is consistent.

\begin{claim}\label{omega' claim}
$\omega'\in \Omega$.
\end{claim}
\begin{proof-of-claim}
Consider any formula $\N\chi\in X$. By Definition~\ref{canonical Omega}, it suffices to show that $\chi\in\omega'$. Indeed, assumption  $\N\chi\in X$ implies that $X\vdash\N\N\chi$ by Lemma~\ref{positive introspection lemma}. Thus, $\N\N\chi\in X$ because set $X$ is maximal. Then, $\N\chi\in\omega$ by Definition~\ref{canonical Omega} and the assumption $\omega\in[\delta]\subseteq\Omega$ of the lemma. Hence, $\chi\in Y\subseteq \omega'$ by equation~(\ref{Y definition}) and the choice of set $\omega'$.
\end{proof-of-claim}

\begin{claim}
$\omega'\in[\delta]$.
\end{claim}
\begin{proof-of-claim}
Recall that $\omega\in[\delta]$ by the assumption of the lemma. Thus, by Claim~\ref{omega' claim}, it suffices to show that $\omega\sim\omega'$. 
Hence, by Definition~\ref{canonical sim}, it suffices to prove that $\R\psi\in \omega$ iff $\R\psi\in \omega'$ for each formula $\psi\in\Phi$. If $\R\psi\in \omega$, then $\omega\vdash\R\R\psi$ by Lemma~\ref{positive introspection lemma}. Hence, $\R\R\psi\in\omega$ because set $\omega$ is maximal.  Thus, $\R\psi\in Y\subseteq \omega'$ by equation~(\ref{Y definition}) and the choice of $\omega'$.

Suppose that $\R\psi\notin \omega$. Thus, $\omega\vdash \R\neg\R\psi$ by Lemma~\ref{negative introspection for R} and the Modus Ponens inference rule. Hence, $\R\neg\R\psi\in \omega$ because set $\omega$ is maximal.  Thus, $\neg\R\psi\in Y\subseteq \omega'$ by equation~(\ref{Y definition}) and the choice of set $\omega'$. Therefore, $\R\psi\notin \omega'$ because set $\omega'$ is consistent.
\end{proof-of-claim}
This concludes the proof of the lemma.
\end{proof}

\begin{lemma}\label{A child all}
For any action $\delta\in\Omega$ of the defender, any response action $\omega\in [\delta]$ of the attacker, and any formula $\phi\in\Phi$, if $\neg(\phi\to \A\phi)\in \omega$, then $\phi\in \omega'$ for each $\omega'\in[\delta]$.
\end{lemma}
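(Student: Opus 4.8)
The plan is to observe first that the hypothesis is nothing but a statement about the modality $\R$: since $\R\phi$ abbreviates $\neg(\phi\to\A\phi)$, the assumption $\neg(\phi\to\A\phi)\in\omega$ says precisely that $\R\phi\in\omega$. The whole lemma then reduces to showing that membership of an $\R$-formula propagates to every element of the equivalence class $[\delta]$, and that $\R\phi$ forces $\phi$.

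For the propagation step, I would use the definition of the attacker's response set. By Definition~\ref{canonical A} we have $\mathcal{A}_\delta=[\delta]$, so both $\omega$ and the arbitrary $\omega'$ lie in the single $\sim$-equivalence class $[\delta]$. Since $\sim$ is an equivalence relation, $\omega\in[\delta]$ and $\omega'\in[\delta]$ give $\omega\sim\omega'$ (by symmetry and transitivity through $\delta$). Now Definition~\ref{canonical sim} states that $\omega\sim\omega'$ means $\R\psi\in\omega\Leftrightarrow\R\psi\in\omega'$ for every formula $\psi$. Instantiating $\psi$ with $\phi$ and using $\R\phi\in\omega$ yields $\R\phi\in\omega'$.

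To finish, I would invoke the Truth axiom for $\R$, namely Lemma~\ref{truth axiom for R}, which gives $\vdash\R\phi\to\phi$. Combined with $\R\phi\in\omega'$ and the Modus Ponens inference rule, this yields $\omega'\vdash\phi$; since $\omega'$ is a maximal consistent set, $\phi\in\omega'$, as required. Because $\omega'$ was an arbitrary element of $[\delta]$, this establishes the conclusion for every $\omega'\in[\delta]$.

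There is essentially no hard step here: the entire content is the bookkeeping that $[\delta]$ is a single $\sim$-class together with the $\R$-invariance built into Definition~\ref{canonical sim}. The only point that deserves care is making the reduction from $\neg(\phi\to\A\phi)$ to $\R\phi$ explicit, after which the result is immediate from the already-proven S5 properties of $\R$.
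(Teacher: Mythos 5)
Your argument is correct and follows exactly the paper's own proof: unfold $\neg(\phi\to\A\phi)$ as $\R\phi$, use $\omega\sim\omega'$ (both in $[\delta]$) with Definition~\ref{canonical sim} to transfer $\R\phi$ to $\omega'$, then apply Lemma~\ref{truth axiom for R} and maximality of $\omega'$ to conclude $\phi\in\omega'$. No differences worth noting.
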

\begin{proof}
Assumption $\neg(\phi\to \A\phi)\in \omega$ implies $\R\phi\in \omega$ by the definition of the modality $\R$. Note that $\omega\sim\omega'$ because $\omega,\omega'\in[\delta]$. Thus, $\R\phi\in \omega'$ by Definition~\ref{canonical sim}. Hence, $\omega'\vdash\phi$ by Lemma~\ref{truth axiom for R} and the Modus Ponens inference rule. Therefore, $\phi\in \omega'$ because set $\omega'$ is maximal.
\end{proof}

\begin{lemma}\label{N child all}
For any actions $\omega,\omega'\in\Omega$, if $\N\phi\in\omega$, then $\phi\in\omega'$. 
\end{lemma}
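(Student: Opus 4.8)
The plan is to route the argument through the globally fixed maximal consistent set $X$, since Definition~\ref{canonical Omega} ties every member of $\Omega$ to the $\N$-consequences of $X$ rather than to one another directly. Concretely, I would first upgrade the hypothesis $\N\phi\in\omega$ to $\N\phi\in X$, and then transfer $\phi$ into $\omega'$ using only that $\omega'\in\Omega$.

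For the first step I would argue by contradiction. Suppose $\N\phi\notin X$; then $\neg\N\phi\in X$ because $X$ is maximal. The Negative Introspection axiom gives $\vdash\neg\N\phi\to\N\neg\N\phi$, so $X\vdash\N\neg\N\phi$ and hence $\N\neg\N\phi\in X$ again by maximality of $X$. Now I apply Definition~\ref{canonical Omega} to $\omega\in\Omega$: since $\N\neg\N\phi\in X$, the formula $\neg\N\phi$ lies in $\{\psi\in\Phi\;|\;\N\psi\in X\}\subseteq\omega$, so $\neg\N\phi\in\omega$. This contradicts $\N\phi\in\omega$ together with the consistency of $\omega$. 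Therefore $\N\phi\in X$.

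The second step is then immediate: $\N\phi\in X$ means $\phi\in\{\psi\in\Phi\;|\;\N\psi\in X\}$, and since $\omega'\in\Omega$, Definition~\ref{canonical Omega} yields $\{\psi\in\Phi\;|\;\N\psi\in X\}\subseteq\omega'$, whence $\phi\in\omega'$, as required.

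I do not expect a real obstacle here beyond identifying the correct intermediary. This is a standard S5 canonical-model argument, and the only substantive ingredient is the Negative Introspection axiom, which is exactly what forces the $\N$-modality to be ``constant'' across the single equivalence class that $\Omega$ represents; no induction or propositional bookkeeping beyond maximality and consistency is needed. (Note that a direct argument relating $\omega$ and $\omega'$ is not available, precisely because $\Omega$ is defined via $X$ and not via a relation between its own members, so the detour through $X$ is essential.)
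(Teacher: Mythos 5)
Your proof is correct and uses essentially the same argument as the paper: both route through $X$ via Definition~\ref{canonical Omega}, using maximality of $X$ and the Negative Introspection axiom to establish $\N\phi\in X$, and then push $\phi$ into $\omega'$. The paper phrases it contrapositively (assuming $\phi\notin\omega'$ and deriving $\N\phi\notin\omega$), but the chain of inferences is identical to yours.
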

\begin{proof}
Suppose that $\phi\notin\omega'$.  Hence, $\N\phi\notin X$ by Definition~\ref{canonical Omega} and the assumption $\omega'\in\Omega$. Thus, $\neg\N\phi\in X$ because set $X$ is maximal. Then, $X\vdash\N\neg\N\phi$ by the Negative Introspection axiom and the Modus Ponens inference rule. Hence, $\N\neg\N\phi\in X$ again because set $X$ is maximal. Thus, $\neg\N\phi\in\omega$ by Definition~\ref{canonical Omega} and the assumption  $\omega\in\Omega$. Therefore, $\N\phi\notin \omega$ because set $\omega$ is consistent.
\end{proof}

\begin{lemma}\label{N child exists}
For any action $\omega\in\Omega$ and any formula $\neg\N\phi\in\omega$, there is an action $\omega'\in\Omega$ such that $\phi\notin\omega'$.
\end{lemma}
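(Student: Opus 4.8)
The plan is to construct the required action $\omega'$ as a maximal consistent extension of a suitably chosen set of formulae, in the standard style of an ``existence'' (or diamond) lemma for the S5 modality $\N$. Concretely, I would consider
$$
Y=\{\neg\phi\}\cup\{\chi\in\Phi\;|\;\N\chi\in X\},
$$
argue that $Y$ is consistent, take any maximal consistent extension $\omega'$ of $Y$, and check that $\omega'$ does the job. Since $\{\chi\;|\;\N\chi\in X\}\subseteq Y\subseteq\omega'$, Definition~\ref{canonical Omega} immediately gives $\omega'\in\Omega$; and since $\neg\phi\in Y\subseteq\omega'$ while $\omega'$ is consistent, we get $\phi\notin\omega'$, as desired.

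The real content lies in the consistency of $Y$, which I would establish by contradiction. If $Y$ were inconsistent, there would be finitely many formulae $\N\chi_1,\dots,\N\chi_n\in X$ with $\chi_1,\dots,\chi_n\vdash\phi$. Applying Lemma~\ref{superdistributivity} with $\Box=\N$ yields $\N\chi_1,\dots,\N\chi_n\vdash\N\phi$, and hence $X\vdash\N\phi$ because each $\N\chi_i$ belongs to $X$; by maximality of $X$ this gives $\N\phi\in X$. It then remains to contradict the hypothesis $\neg\N\phi\in\omega$.

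The step I expect to be the main obstacle is precisely this bridge between the fixed set $X$ and the given action $\omega$, since one cannot directly transport $\N\phi$ from $X$ to $\omega$: the defining inclusion of Definition~\ref{canonical Omega} only moves a formula $\chi$ whenever $\N\chi\in X$, not the modalized formula $\N\phi$ itself. The key is to make $\N\phi$ into such a $\chi$. I would apply positive introspection (Lemma~\ref{positive introspection lemma} with $\Box=\N$) to obtain $X\vdash\N\N\phi$, hence $\N\N\phi\in X$ by maximality; then Definition~\ref{canonical Omega} together with $\omega\in\Omega$ forces $\N\phi\in\omega$, contradicting $\neg\N\phi\in\omega$ by consistency of $\omega$. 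This contradiction establishes the consistency of $Y$, and everything else---the Lindenbaum extension and the two membership checks $\omega'\in\Omega$ and $\phi\notin\omega'$---is routine.
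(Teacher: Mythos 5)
Your proof is correct, and it follows the same overall template as the paper's (a Lindenbaum-style witness construction using Lemma~\ref{superdistributivity} and Lemma~\ref{positive introspection lemma}), but it anchors the witness set at $X$ rather than at $\omega$, which redistributes the work in an interesting way. The paper takes $Y=\{\neg\phi\}\cup\{\psi\;|\;\N\psi\in\omega\}$: consistency of $Y$ is then immediate from $\neg\N\phi\in\omega$ (an inconsistency would give $\omega\vdash\N\phi$ directly), but verifying $\omega'\in\Omega$ is where positive introspection enters --- one must show $\N\chi\in X$ implies $\chi\in\omega'$, which goes via $\N\N\chi\in X$, hence $\N\chi\in\omega$, hence $\chi\in Y$. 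Your choice $Y=\{\neg\phi\}\cup\{\chi\;|\;\N\chi\in X\}$ reverses this: the membership $\omega'\in\Omega$ becomes a one-line consequence of Definition~\ref{canonical Omega}, while the consistency argument now carries the burden of bridging $X$ and $\omega$, and you correctly identify that the bridge is $X\vdash\N\N\phi$, hence $\N\N\phi\in X$, hence $\N\phi\in\omega$, contradicting $\neg\N\phi\in\omega$. Both arguments invoke exactly the same two lemmas once each; neither is shorter, and the choice is essentially a matter of where one prefers to pay for positive introspection. One small observation in favor of the paper's version: its set $Y$ is the natural restriction of the set used in Lemma~\ref{A child exists} (dropping the $\R$-clause), so the two existence lemmas read uniformly; your version trades that uniformity for a trivial $\Omega$-membership check.
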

\begin{proof}
Consider the set of formulae
\begin{equation}\label{Y definition 2}
    Y = \{\neg\phi\}\cup\{\psi\;|\;\N\psi\in \omega\}.
\end{equation}
\begin{claim}
Set $Y$ is consistent.
\end{claim}
\begin{proof-of-claim}
Suppose the opposite. Thus, there are formulae
\begin{equation}\label{choice of psi}
    \N\psi_1,\dots,\N\psi_n\in \omega
\end{equation}
such that $\psi_1,\dots,\psi_n\vdash\phi$. Hence, $\N\psi_1,\dots,\N\psi_n\vdash\N\phi$ by Lemma~\ref{superdistributivity}. Thus, $\omega\vdash\N\phi$ by the assumption~(\ref{choice of psi}), which contradicts the assumption $\neg\N\phi\in\omega$ of the lemma because set $\omega$ is consistent.
\end{proof-of-claim}
Let set $\omega'$ be any maximal consistent extension of set $Y$. Then, $\neg\phi\in \omega'$. Thus, $\phi\notin \omega'$ because set $\omega'$ is consistent.

\begin{claim}
$\omega'\in \Omega$.
\end{claim}
\begin{proof-of-claim}
Consider any formula $\N\psi\in X$. By Definition~\ref{canonical Omega}, it suffices to show that $\psi\in\omega'$. Indeed, assumption  $\N\psi\in X$ implies that $X\vdash\N\N\psi$ by Lemma~\ref{positive introspection lemma}. Thus, $\N\N\psi\in X$ because set $X$ is maximal. Then, $\N\psi\in\omega$ by Definition~\ref{canonical Omega} and the assumption $\omega\in\Omega$ of the lemma. Therefore, $\psi\in Y\subseteq \omega'$ by equation~(\ref{Y definition 2}) and the choice of set $\omega'$.
\end{proof-of-claim}
This concludes the proof of the lemma.
\end{proof}

\begin{lemma}[truth lemma]\label{truth lemma}
For each formula $\phi$, each action of the defender $\delta\in\Omega$, and each response action $\omega \in [\delta]$ of the attacker,
$(\delta,\omega)\Vdash\phi$ iff $\phi\in\omega$.

\end{lemma}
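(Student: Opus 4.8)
The plan is to prove the biconditional by induction on the structural complexity of $\phi$, handling the five grammar cases in turn and invoking the four preceding auxiliary lemmas in the two modal cases. Throughout, the standing hypothesis $\omega\in[\delta]=\mathcal{A}_\delta$ (Definition~\ref{canonical A}) is what ties the maximal consistent set $\omega$ to the game-theoretic response at which satisfaction is evaluated.

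For the base case $\phi=p$, I would unfold $(\delta,\omega)\Vdash p$ via item 1 of Definition~\ref{sat} to $(\delta,\omega)\in\pi(p)$, then apply Definition~\ref{canonical pi}; since the hypothesis already gives $\omega\in\mathcal{A}_\delta$, this collapses to $p\in\omega$. The Boolean cases $\neg\phi$ and $\phi\to\psi$ are routine: unfold items 2 and 3 of Definition~\ref{sat}, apply the induction hypothesis, and close using the fact that $\omega$ is a maximal consistent set, hence closed under Boolean reasoning and deciding every formula.

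For the necessity case $\N\phi$, the key observation is that $\sim$ is reflexive, so every $\omega'\in\Omega$ is a response to itself and the quantifier in item 4 of Definition~\ref{sat} effectively ranges over all of $\Omega$. In the forward direction, assuming $\N\phi\in\omega$, Lemma~\ref{N child all} gives $\phi\in\omega'$ for every $\omega'\in\Omega$, so the induction hypothesis yields $(\delta',\omega')\Vdash\phi$ for every defender--response pair, hence $(\delta,\omega)\Vdash\N\phi$. Conversely, if $\N\phi\notin\omega$ then $\neg\N\phi\in\omega$ by maximality, and Lemma~\ref{N child exists} produces an $\omega'\in\Omega$ with $\phi\notin\omega'$; applying the induction hypothesis to the pair $(\omega',\omega')$ witnesses the failure of $\N\phi$ at $(\delta,\omega)$.

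The attacker case $\A\phi$ is where the argument is most delicate, though the heavy lifting is already in the auxiliary lemmas. For the forward direction, $\A\phi\in\omega$ feeds directly into Lemma~\ref{A child exists}, whose two conclusions---that $\phi\in\omega$ and that some $\omega'\in[\delta]$ omits $\phi$---become, via the induction hypothesis, exactly the two clauses of item 5 of Definition~\ref{sat}. For the converse I would argue contrapositively: assume $\A\phi\notin\omega$ and suppose toward a contradiction that $(\delta,\omega)\Vdash\A\phi$. Then the induction hypothesis gives $\phi\in\omega$, and since $\neg\A\phi\in\omega$ by maximality we obtain $\phi\wedge\neg\A\phi\in\omega$, i.e. $\R\phi\in\omega$ by the definition of $\R$. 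Lemma~\ref{A child all} then forces $\phi\in\omega''$ for every $\omega''\in[\delta]$, contradicting the second clause of item 5, which supplies a sibling response omitting $\phi$. The main obstacle is recognizing this last reduction---that $\A\phi\notin\omega$ together with the truth of $\phi$ is precisely the modality $\R\phi$---since it is what lets the S5 behaviour of $\R$ captured in Lemma~\ref{A child all} override the purported counterexample response.
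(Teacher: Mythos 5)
Your proof is correct and follows essentially the same route as the paper: the same structural induction, the same use of Lemmas~\ref{A child exists}, \ref{A child all}, \ref{N child all}, and \ref{N child exists} in the modal cases, and the same key reduction of ``$\phi$ true but $\A\phi\notin\omega$'' to $\R\phi\in\omega$. The only cosmetic difference is that in the hard direction of the $\A$ case you argue by contradiction, whereas the paper splits into two cases according to whether $\psi\to\A\psi\in\omega$; the two organizations are logically equivalent.
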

\begin{proof}
We prove the lemma by structural induction on formula $\phi$. The case when formula $\phi$ is a propositional variable follows from Definition~\ref{sat} and Definition~\ref{canonical pi}. The cases when formula $\phi$ is a negation or an implication follow from Definition~\ref{sat} and the assumption of the maximality and the consistency of set $\omega$ in the standard way.

\vspace{2mm}
Suppose that formula $\phi$ has the form $\A\psi$.

\noindent$(\Rightarrow):$ Assume that $\A\psi\notin\omega$. Hence, $\omega\nvdash\A\psi$ because set $\omega$ is maximal. We consider the following two cases separately:

\noindent{\bf Case I:} $(\psi\to\A\psi)\in \omega$. Thus, statement $\omega\nvdash\A\psi$ implies $\omega\nvdash\psi$ by the contraposition of the Modus Ponens inference rule. Hence, $\psi\not\in \omega$. Then, $(\delta,\omega)\nVdash\phi$ by the induction hypothesis. Therefore, $(\delta,\omega)\nVdash\A\phi$ by item 5 of Definition~\ref{sat}.

\noindent{\bf Case II:} $(\psi\to\A\psi)\notin \omega$. Hence, $\neg(\psi\to\A\psi)\in \omega$ because set $\omega$ is maximal. Thus, $\psi\in\omega'$ for each action $\omega'\in [\delta]$, by Lemma~\ref{A child all}. Then, by the induction hypothesis, $(\delta,\omega')\Vdash\psi$ for each response action $\omega'\in[\delta]$ of the attacker on action $\delta\in\Omega$ of the defender. Therefore, $(\delta,\omega)\nVdash\A\psi$ by item 5 of  Definition~\ref{sat}.   

\noindent$(\Leftarrow):$ Assume that $\A\psi\in\omega$. Thus, by Lemma~\ref{A child exists}, we have
(i) $\psi\in\omega$ and
(ii) there is a response action $\omega'\in[\delta]$ such that $\psi\notin\omega'$.  Hence, by the induction hypothesis, 
(i) $(\delta,\omega)\Vdash\psi$ and
(ii) there is a response action $\omega'\in[\delta]$ of the attacker such that $(\delta,\omega')\nVdash \psi$. Therefore, $(\delta,\omega)\Vdash\A\psi$ by item 5 of Definition~\ref{sat}.

\vspace{2mm}
Next, assume formula $\phi$ has the form $\N\psi$. 

\noindent$(\Rightarrow):$  Let $\N\psi\notin \omega$. Thus, $\neg\N\psi\in\omega$ because set $\omega$ is maximal. Hence, by Lemma~\ref{N child exists}, there is an action $\omega'\in\Omega$ such that $\psi\notin\omega'$. Note that $\omega'\in[\omega']$ because $[\omega']$ is an equivalence class. Thus, $(\omega',\omega')\nVdash\psi$ by the induction hypothesis. Therefore, $(\delta,\omega)\nVdash\N\psi$ by item 4 of Definition~\ref{sat}.

\noindent$(\Leftarrow):$ Suppose that $\N\psi\in \omega$. Thus, $\psi\in \omega'$ for each action $\omega'\in\Omega$ by Lemma~\ref{N child all}. Hence, by the induction hypothesis, $(\delta',\omega')\Vdash\psi$ for each action $\delta'\in\Omega$ of the defender and each response action $\omega'\in[\delta']$ of the attacker. Therefore, $(\delta,\omega)\Vdash\N\psi$ by item 4 of Definition~\ref{sat}.
\end{proof}

Recall that the canonical game $G(X)$ is defined for an arbitrary maximal consistent set of formulae $X$.

\begin{lemma}\label{X lemma}
$X\in \Omega$.
\end{lemma}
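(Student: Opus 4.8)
The plan is to verify directly that $X$ satisfies the two defining conditions of membership in $\Omega$ given in Definition~\ref{canonical Omega}: that $X$ is a maximal consistent set of formulae, and that $\{\phi\in\Phi\;|\;\N\phi\in X\}\subseteq X$. The first condition holds by hypothesis, since the canonical game $G(X)$ is built from an arbitrary maximal consistent set $X$ to begin with. So the entire content of the lemma reduces to establishing the inclusion condition, namely that every formula $\phi$ with $\N\phi\in X$ already belongs to $X$.

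First I would fix an arbitrary formula $\phi$ with $\N\phi\in X$ and argue $\phi\in X$. Since $X$ is maximal and consistent, $\N\phi\in X$ gives $X\vdash\N\phi$. The Truth axiom provides $\vdash\N\phi\to\phi$, so by the Modus Ponens inference rule $X\vdash\phi$. Because $X$ is maximal, this yields $\phi\in X$. Hence $\{\phi\in\Phi\;|\;\N\phi\in X\}\subseteq X$, as required.

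There is no real obstacle here: the statement is essentially an instance of the reflexivity of the accessibility relation induced by $\N$, read off the Truth axiom, and the only thing to check is that $X$ itself plays the role of a legitimate world over which that axiom applies. Combining the two verified conditions with Definition~\ref{canonical Omega} gives $X\in\Omega$, which is exactly the statement of the lemma and ensures that $X$ can serve as both a defender's action and, via $X\in[X]$, an attacker's response in the canonical game, so that the truth lemma (Lemma~\ref{truth lemma}) can be instantiated at $X$ in the final completeness argument.
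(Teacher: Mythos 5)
Your proof is correct and follows essentially the same route as the paper's: you reduce the claim to the inclusion $\{\phi\in\Phi\;|\;\N\phi\in X\}\subseteq X$ required by Definition~\ref{canonical Omega} and establish it via the Truth axiom, the Modus Ponens inference rule, and the maximality of $X$. The surrounding remarks about reflexivity and the role of $X$ in the completeness argument are accurate but not needed for the proof itself.
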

\begin{proof}
Consider any formula $\N\phi\in X$. By Definition~\ref{canonical Omega}, it suffices to show that $\phi\in X$. Indeed, assumption $\N\phi\in X$ implies $X\vdash\phi$ by the Truth axiom and the Modus Ponens inference rule. Thus, $\phi\in X$ because set $X$ is maximal. 
\end{proof}

\subsection{Strong Completeness Theorem}

\begin{theorem}
If $X_0\nvdash\phi$, then there is an action $d\in\mathcal{D}$ of the defender and a response action $a\in\mathcal{A}_d$ of the attacker in a game $(\mathcal{D},\{\mathcal{A}_d\}_{d\in\mathcal{D}},\pi)$ such that $(d,a)\Vdash\chi$ for each formula $\chi\in X_0$ and $(d,a)\nVdash\phi$.
\end{theorem}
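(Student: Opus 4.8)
The plan is to run the standard maximal-consistent-set/canonical-model argument, using the canonical game $G(X)$ already constructed above as the witnessing game. First I would pass from the non-derivability hypothesis to a consistent set: assuming $X_0\nvdash\phi$, the set $X_0\cup\{\neg\phi\}$ must be consistent, since otherwise $X_0\vdash\neg\neg\phi$ and hence $X_0\vdash\phi$ by propositional reasoning, contradicting the assumption. By the Lindenbaum lemma I would extend $X_0\cup\{\neg\phi\}$ to a maximal consistent set $X$. Then $X_0\subseteq X$, and because $\neg\phi\in X$ while $X$ is consistent, we have $\phi\notin X$.

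Next I would instantiate the canonical game $G(X)=(\Omega,\{\mathcal{A}_\delta\}_{\delta\in\Omega},\pi)$ built from this particular $X$ via Definitions~\ref{canonical Omega}--\ref{canonical pi}. By Lemma~\ref{X lemma} we have $X\in\Omega$, so $X$ is a legitimate action of the defender, i.e. $X\in\mathcal{D}$ in the notation of the theorem. The one point that deserves a remark is that we must also exhibit a response action of the attacker compatible with this defender action; here the key observation is that the relation $\sim$ of Definition~\ref{canonical sim} is reflexive, so $X\sim X$ and therefore $X\in[X]=\mathcal{A}_X$ by Definition~\ref{canonical A}. Thus the single set $X$ serves simultaneously as the defender's action $\delta$ and as the attacker's response $\omega$ to it, and the pair $(X,X)$ is a genuine action profile of the game $G(X)$.

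Finally I would invoke the truth lemma (Lemma~\ref{truth lemma}) with $\delta=\omega=X$. For every $\chi\in X_0$ we have $\chi\in X$ because $X_0\subseteq X$, so the truth lemma yields $(X,X)\Vdash\chi$; and since $\phi\notin X$, the same lemma yields $(X,X)\nVdash\phi$. Taking $d=a=X$ then gives the required defender action and attacker response with $(d,a)\Vdash\chi$ for all $\chi\in X_0$ and $(d,a)\nVdash\phi$, completing the proof.

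Since all the genuinely difficult work is discharged by the earlier results, the theorem itself is essentially a routine assembly. Consequently I do not expect a serious obstacle at this stage; the only subtlety, as noted, is realizing that the defender's action and the attacker's response may both be taken to be $X$, a choice justified precisely by the reflexivity of $\sim$ (equivalently, by $X$ lying in its own equivalence class).
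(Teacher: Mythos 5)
Your proposal is correct and follows essentially the same route as the paper's own proof: extend $X_0\cup\{\neg\phi\}$ to a maximal consistent set $X$, form the canonical game $G(X)$, note $X\in\Omega$ by Lemma~\ref{X lemma} and $X\in[X]=\mathcal{A}_X$ by reflexivity of $\sim$, and apply the truth lemma to the profile $(X,X)$. The only difference is that you spell out the consistency of $X_0\cup\{\neg\phi\}$, which the paper leaves implicit.
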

\begin{proof}
Let the set of formulae $X\subseteq\Phi$ be any maximal consistent extension of set $X_0\cup\{\neg\phi\}$. Then, $\phi\notin X$ because set $X$ is consistent.

Consider the canonical game $G(X)=(\Omega,\{\mathcal{A}_\delta\}_{\delta\in\Omega},\pi)$. Then, $X\in\Omega$ by Lemma~\ref{X lemma}. Also, $X\in[X]=\mathcal{A}_X$ because set $[X]$ is an equivalence class and because of Definition~\ref{canonical A}. Therefore, $(X,X)\Vdash\chi$ for each formula $\chi\in X_0\subseteq X$ and $(X,X)\nVdash\phi$ by Lemma~\ref{truth lemma}. 
\end{proof}

\section{Conclusion}

In this paper we gave a sound and complete axiomatic system that describes the properties of blameworthiness in security games. A natural next step is to generalize this work to arbitrary extensive form games. The Conjunction and the No Blame axioms in this paper are specific to security games and are not sound for arbitrary extensive form games. As we have seen in Lemma~\ref{conjunction lemma axioms section} and Lemma~\ref{no blame lemma axioms section}, these axioms are already not sound for the player who makes the first move in a security game. Although these axioms are sound for the player making the second move in security games, it is not sound for the second player in an arbitrary extensive form game. Consider, for example, game $G_3$ depicted in Figure~\ref{conclusion figure}. In this game, $(d_1,a_2)\Vdash \A(p\wedge q)$ because formula $p\wedge q$ is true under the action profile $(d_1,a_2)$, but the second player could have prevented it by using action $a_1$ instead of $a_2$. At the same time,  $(d_1,a_2)\nVdash \A p\vee  \A q$ because the second player has neither a strategy that would prevent $p$ nor a strategy that would prevent $q$. This is a counterexample for the Conjunction axiom.
\begin{figure}[ht]\label{conclusion figure}
\begin{center}
\vspace{0mm}
\scalebox{0.48}{\includegraphics{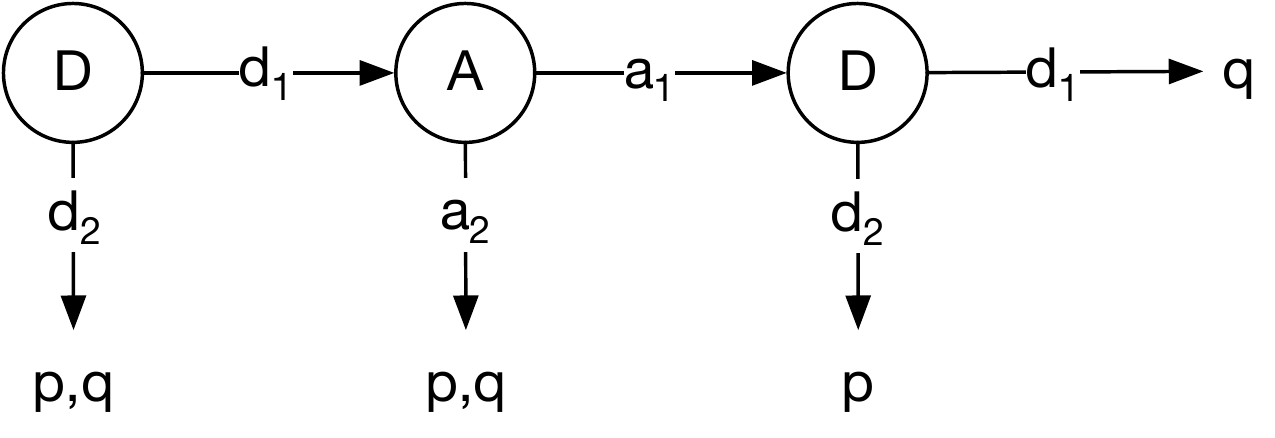}}
\vspace{0mm}
\caption{Game $G_3$, where $(d_1,a_2)\nVdash \A(p\wedge q)\to(\A p\vee\A q)$, and $(d_1,a_1,d_1)\Vdash \A(p\to\A p)$.
}\label{conclusion figure}
\end{center}
\end{figure}
The game $G_3$ also provides a counterexample for the No Blame axiom: $(d_1,a_1,d_1)\Vdash \A(p\to\A p)$. Indeed, $(d_1,a_1,d_1)\Vdash p\to\A p$ because $(d_1,a_1,d_1)\nVdash p$. At the same time, $(d_1,a_2)\nVdash p\to\A p$. Thus, the second player could have prevented $p\to\A p$ by using $a_2$ instead of $a_1$.

In addition to finding the right set of axioms, proving a completeness theorem would also require to recover the structure of the canonical game tree from a maximal consistent set of formulae. Finding the right set of axioms sound for all extensive form games and proving their completeness remains an open problem.


\bibliographystyle{aaai}
\bibliography{sp}

\end{document}